\documentclass{article}



\usepackage[final,nonatbib]{neurips_2020}


\usepackage[T1]{fontenc}    
\usepackage{hyperref}       
\usepackage{url}            
\usepackage{booktabs}       
\usepackage{amsfonts}       
\usepackage{nicefrac}       
\usepackage{microtype}      
\usepackage{amsmath}
\usepackage{amsthm}
\usepackage[square,sort,comma,numbers]{natbib}
\bibliographystyle{plain}
\usepackage{multirow}
\usepackage{graphicx}
\usepackage{subfigure}
\usepackage{algorithm}
\usepackage{algorithmic}
\usepackage{enumitem}
\usepackage{booktabs}
\usepackage{hhline}


\usepackage{color}
\newtheorem{theorem}{Theorem}

\graphicspath{{figures/}}

\title{Calibrated Reliable Regression using\\ Maximum Mean Discrepancy}

%

\author{%
Peng Cui\textsuperscript{1 2}, \quad Wenbo Hu\textsuperscript{1 2}, \quad Jun Zhu\textsuperscript{1}\thanks{J.Z is the corresponding author.} \\
\textsuperscript{1} Dept. of Comp. Sci. \& Tech., Institute for AI, BNRist Center\\
Tsinghua-Bosch Joint ML Center, THBI Lab, Tsinghua University, Beijing, 100084 China \\ 
\textsuperscript{2} RealAI \\
 \texttt{xpeng.cui@gmail.com,~wenbo.hu@realai.ai,~dcszj@tsinghua.edu.cn}
}
\begin{document}
\maketitle

\begin{abstract}
Accurate quantification of uncertainty is crucial for real-world applications of machine learning. However, modern deep neural networks still produce unreliable predictive uncertainty, often yielding over-confident predictions. In this paper, we are concerned with getting well-calibrated predictions in regression tasks. We propose the calibrated regression method using the maximum mean discrepancy by minimizing the kernel embedding measure.
Theoretically, the calibration error of our method asymptotically converges to zero when the sample size is large enough.
Experiments on non-trivial real datasets show that our method can produce well-calibrated and sharp prediction intervals, which outperforms the related state-of-the-art methods.
\end{abstract}


\section{Introduction}

Deep learning has achieved significant progress on a wide range of complex tasks~\cite{en9} mainly in terms of some metrics on prediction accuracy. However, high accuracy alone is often not sufficient to characterize the performance in real applications, where 
uncertainty is pervasive because of various facts such as incomplete knowledge, ambiguities, and contradictions. Accurate quantification of uncertainty is crucial to derive a robust prediction rule.  
For example, an accurate uncertainty estimate can reduce the occurrence of accidents in medical diagnosis~\cite{leibig2017leveraging}, warn users in time in self-driving systems~\cite{michelmore2020uncertainty}, reject low-confidence predictions~\cite{cosentino2019generative}, 
and better meet consumers' order needs of internet services especially on special events~\cite{zhu2017deep}. In general, there are two main types of uncertainty, \textit{aleatoric} uncertainty and \textit{epistemic} uncertainty~\cite{der2009aleatory}. \textit{Aleatoric} uncertainty captures inherent data noise (e.g., sensor noise), while \textit{epistemic} uncertainty is considered to be caused by model parameters and structure, which can be reduced by providing enough data.

Though important, it is highly nontrivial to properly characterize uncertainty. Deep neural networks (DNNs) typically produce point estimates of parameters and predictions, and are insufficient to characterize uncertainty because of their deterministic functions~\cite{Gal2016Uncertainty}. It has been widely observed that the modern neural networks are not properly calibrated and often tend to produce over-confident predictions~\citep{amodei2016concrete,guo2017calibration}.
An effective uncertainty estimation is to directly model the predictive distribution with the observed data in a Bayesian style~\cite{mackay1992practical}. But performing Bayesian inference on deep networks is still a very challenging task, where the networks define highly nonlinear functions and are often over-parameterized~\cite{wang2019function,shi2017kernel}.  
The uncertainty estimates of Bayesian neural networks (BNNs) may lead to an inaccurate uncertainty quantification because of either model misspecification or the use of approximate inference~\citep{Kuleshov}. Besides, BNNs are computationally more expensive and slower to train in practice, compared to non-Bayesian NNs. For example, a simple method of MC-Dropout directly captures uncertainty without changing the network structure~\cite{gal2016dropout}. But the uncertainty quantification of MC-Dropout can be inaccurate, as will be seen in the empirical results in this paper.

Apart from BNNs, some methods have been developed to incorporate the variance term into NNs to estimate the predictive uncertainty. For instance, \citep{Kendall} proposed a heteroscedastic neural network (HNN) to combine both model uncertainty and data uncertainty simultaneously, getting the mean and variance by designing two outputs in the last layer of the network. Based on HNN, \citep{Lakshminarayanan} described a simple and scalable method for estimating predictive uncertainty from ensembled HNNs, named as \textit{deep ensemble}.
But the ensembled model is usually computationally expensive especially when the model structure is complex.

An alternative way to obtain accurate predictive uncertainty is to calibrate the inaccurate uncertainties. 
Early attempts were made to use the scaling and isotonic regression techniques to calibrate the supervised learning predictions of traditional models, such as SVMs, neural networks and decision trees~\cite{platt1999probabilistic,niculescu2005predicting}.
For regression tasks, the prediction intervals are calibrated based on the proportion of covering ground truths.
Recently, \citep{guo2017calibration,Kuleshov} adopted a post-processing step to adjust the output probabilities of the modern neural networks based on the temperature scaling and non-parametric isotonic regression techniques.
Such \textit{post-processing} methods can be directly applied to both BNNs and DNNs without model modifications. 
But they need to train an auxiliary model and rely on an additional validation dataset.
Moreover, the isotonic regression tends to overfit especially for small datasets~\cite{song2019distribution}. 
\citep{pearce2018high,Thiagarajan} directly incorporated a calibration error to loss functions to obtain the calibrated prediction intervals at the specific confidence level.
Predetermining the specific confidence level can be regarded as the ``point calibration'' and its calibration model needs to be retrained when the confidence level is changed.
\cite{song2019distribution} proposed an extension to the post-processing procedure of the isotonic regression, using Gaussian Processes (GPs) and Beta link functions. This method improves calibration at the distribution level compared to existing post-processing methods, but is computationally expensive because of the GPs.

In this paper, we propose a new way to obtain the calibrated predictive uncertainty of regression tasks at the global quantile level --- it derives a distribution matching strategy and gets the well-calibrated distribution which can output predictive uncertainties at all confidence levels.
Specifically, we minimize the maximum mean discrepancy (MMD)~\cite{gretton2012kernel} to reduce the distance between the predicted probability uncertainty and true one. 
We show that the calibration error of our model asymptotically converges to zero when the sample size is sufficiently large. Extensive empirical results on the regression and time-series forecasting tasks show the effectiveness and flexibility of our method.

\section{Preliminaries}
In this section, we introduce some preliminary knowledge of calibrated regressor and maximum mean discrepancy, as well as the notations used in the sequel.  
\subsection{Calibrated Regressor}  
Let us denote a predictive regression model as $f$: $x \rightarrow y$, where ${x}\in \mathbb{R}^{d}$ and ${y} \in \mathbb{R}$ are random variables. We use $\Theta$ to denote the parameters of $f$. We learn a proposed regression model given a labeled dataset $\left\{\left(x_{i},  y_{i}\right)\right\}_{i=1}^N$ with $N$ samples. 
To obtain more detailed uncertainty of the output distribution, a calibrated regressor outputs the cumulative distribution function~(CDF) $F_i$ by the predictive distribution
for each input $x_i$. 
When evaluating the calibration of regressors, the inverse function of CDF $F_i^{-1}:[0,1] \rightarrow \hat{y_i} $ is used to denote the quantile function:
\begin{equation}
F_{i}^{-1}(p)=\inf \left\{y: p \leq F_{i}(y)\right\}.
\end{equation}

Intuitively, the calibrated regressor should produce calibrated prediction intervals (PIs).
For example, given the probability $95\%$,  the calibrated regressor should output the prediction interval that approximately covers $95\%$ of ground truths in the long run.

Formally, we define a \emph{well-calibrated} regressor~\cite{dawid1982well,Kuleshov} if the following condition holds, for all $p\in[0,1]$,
\begin{equation}\frac{\sum_{i=1}^{N} \mathbb{I}\left\{y_{i} \leq F_{i}^{-1}(p)\right\}}{N} \rightarrow p, \text{when } N \rightarrow\infty,
\end{equation}
where $\mathbb{I}(\cdot)$ is the indicator function that equals to $1$ if the predicate holds
otherwise $0$.
More generally, for a prediction interval $[F_{i}^{-1}(p_1), F_{i}^{-1}(p_2)]$, there is a similar definition of two-sided calibration as follows:
\begin{equation}
\frac{\sum_{i=1}^{N} \mathbb{I}\left\{F_{i}^{-1}\left(p_{1}\right) \leq y_{i} \leq F_{i}^{-1}\left(p_{2}\right)\right\}}{N} \rightarrow p_{2}-p_{1} \quad\text { for all } p_1,p_2 \in[0,1]
\label{eqn:cali_two}
\end{equation}
as $N \rightarrow \infty$. 

For this task, the previous methods applied the post-processing techniques~\cite{Kuleshov,guo2017calibration} or added a regularized loss~\cite{pearce2018high,Thiagarajan}.
But when we want to get PIs with different confidence levels, we need to retrain the 
model because the confidence level is predetermined in the loss function.  
In contrast, we argue that the key challenge for the calibrated regression is getting the well-calibrated distribution. Based on this principle, our method utilizes the distribution matching strategy and aims to directly get a calibrated predictive distribution, which can naturally output well-calibrated CDF and PIs for each input $x_i$.

\subsection{Maximum Mean Discrepancy} 
Our method adopts maximum mean discrepancy (MMD) to perform distribution matching. Specifically, MMD is defined via the Hilbert space embedding of distributions, known as kernel mean embedding~\cite{gretton2012kernel}. Formally, given a probability distribution, the kernel mean embedding represents it as an element in a reproducing kernel Hilbert space (RKHS).
An RKHS $\mathcal{F}$ on $\mathcal{X}$ with the kernel function $k$ is a Hilbert space of functions $g: \mathcal{X}\rightarrow \mathbb{R}$. 
We use $\phi({x})=k({x}, \cdot)$ to represent the feature map of ${x}$. The expectation of embedding on its feature map is defined as:
\begin{equation}\mu_{X}:=\mathbb{E}_{X}[\phi(X)]=\int_{\Omega} \phi({x}) P(d {x}).
\end{equation}
This kernel mean embedding can be used for density estimation and two-sample test~\cite{gretton2012kernel}.

Based on the Hilbert space embedding, the maximum mean discrepancy (MMD) estimator was developed to distinguish two distributions $P$ and $Q$~\cite{gretton2012kernel}.
Formally, the MMD measure is defined as follows:
\begin{equation}
    L_{m}(P, Q)=\left\|\mathbb{E}_{X}(\phi(P))-\mathbb{E}_{X}(\phi(Q))\right\|_{\mathcal{F}}.
\end{equation}

The MMD estimator is guaranteed to be unbiased and has nearly minimal variance among unbiased estimators~\citep{li2015generative}. 
Moreover, it was shown that $L_{m}(P,Q)=0$ if and only if $P = Q$~\cite{gretton2012kernel}. 

We conduct a hypothesis test with null hypotheses $H_0: P = Q$, and the alternative hypotheses $H_1: P \neq Q$ if $L_{m}(P, Q) > c_{\alpha}$ for some chosen threshold $c_{\alpha}>0$. With a characteristic kernel function (e.g., the popular RBF kernels), the MMD measure can be used to distinguish the two different distributions and have been applied to generative modeling~\cite{li2017mmd,li2015generative}.

In practice, the MMD objective can be estimated using the empirical kernel mean embeddings:
\begin{equation}\hat{L}_{m}^{2}(P,Q) = \left\|\frac{1}{N}  \sum_{i=1}^{N}\phi({x}_{1i})-\frac{1}{M}\sum_{j=1}^{M}\phi({x}_{2j})\right\|^{2}_{\mathcal{F}},
\label{eqn:l_m}
\end{equation}
where ${x}_{1i}$ and ${x}_{2j}$ are independent random samples drawn from the distributions $P$ and $Q$ respectively.


\section{Calibrated Regression with Maximum Mean Discrepancy}
We now present the uncertainty calibration with the maximum mean discrepancy and then plug it into the proposed calibrated regression model. We also give the theoretical guarantee to show the effectiveness of our uncertainty calibration strategy.

\subsection{Uncertainty Calibration with Distribution Matching}
Specifically in this part, we use $P$ and $Q$ to represent the unknown true distribution and predictive distribution of our regression model respectively. 
The distribution matching strategy of our uncertainty calibration model is to directly minimize the kernel embedding measure defined by MMD in Eqn.~\eqref{eqn:l_m}.
The specific goal is to let the predictive distribution $Q$ converge asymptotically to the unknown target distribution $P$ so that we can get the calibrated CDFs $\{F_i\}$. 
The strategy is to minimize the MMD distance measure
between the regression ground-truth targets $\{y_1,\cdots,y_n\}$ and random samples $\{\hat{y}_1,\cdots,\hat{y}_n\}$ from the predictive distribution $Q$. The specific form of the MMD distance loss $L_{m}$ is:
\begin{equation}
    L^{2}_{m}(P,Q) :=  \left\|\frac{1}{N}  \sum_{i=1}^{N}\phi({y}_{i})-\frac{1}{N}\sum_{j=1}^{N}\phi({\hat{y}}_{j})\right\|^{2}_{\mathcal{F}}.
\label{eqn:lm}
\end{equation} 

We use a mixture of $k$ kernels spanning multiple ranges for our experiments: 
\begin{equation}k\left(x, x^{\prime}\right)=\sum_{i=1}^{K} k_{\sigma_{i}}\left(x, x^{\prime}\right),\end{equation}
where $k_{\sigma_{i}}$ is an RBF kernel and the bandwith parameter $\sigma_{i}$ can be chosen simple values such as 2,4,8, etc. The kernel was proved to be characteristic, and it can maximize the two-sample test power and low test error~\cite{gretton2012optimal}. In general, a mixture of five kernels or more is sufficient to obtain good results.

With the incorporation of this MMD loss, we learn the calibrated predictive probability distribution and the obtained uncertainties can be generalized to arbitrary confidence levels without retraining.

In theory, under $H_0: P = Q$, the predictive distribution $Q$ will converge asymptotically to the true distribution $P$ as sample size $N \rightarrow \infty$, which is why minimizing MMD loss is effective for uncertainty calibration.
Leveraging our distribution matching strategy, the uncertainty calibration can be achieved by narrowing the gap between $P$ and $Q$.
Formally, we have the following theoretical result:
\begin{theorem}
Suppose that the predictive distribution $Q$ has the sufficient ability to approximate the true unknown distribution $P$, given data is i.i.d. Eqn.~\eqref{equ:thero1} holds by minimizing the MMD loss $L_{\mathrm{m}}=\left\|\mu_{x_1}-\mu_{x_2}\right\|_{\mathcal{F}}$ in our proposed methodology as the sample size $N \rightarrow \infty$
\begin{equation}
\frac{\sum_{i=1}^{N} \mathbb{I}\left\{y_{i} \leq F_{i}^{-1}(p)\right\}}{N} \rightarrow p \quad\text { for all } p \in[0,1]
\label{equ:thero1}
\end{equation}
\end{theorem}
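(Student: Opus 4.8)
The plan is to chain together three ingredients already available in the excerpt: the characteristic property of the mixture kernel, the consistency of the empirical MMD estimator, and the probability integral transform, closing with a law-of-large-numbers argument.

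First I would argue that driving the loss $\hat{L}_m^2(P,Q)$ to its minimum forces the population discrepancy to vanish. Since the empirical kernel mean embeddings in Eqn.~\eqref{eqn:lm} converge to the population embeddings $\mu_{x_1},\mu_{x_2}$ as $N\to\infty$ (the unbiasedness and consistency of the estimator), and since the hypothesis grants $Q$ enough capacity to reach the minimizer, we obtain $L_m(P,Q)=\left\|\mu_{x_1}-\mu_{x_2}\right\|_{\mathcal{F}}=0$ in the limit. Invoking the characteristic property of the RBF mixture kernel --- namely $L_m(P,Q)=0$ if and only if $P=Q$ --- then yields $P=Q$.

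Second, I would translate this distributional identity into a statement about each predictive CDF. When $Q=P$, the target $y_i$ is distributed according to its own predictive CDF $F_i$, so by the probability integral transform the variable $U_i:=F_i(y_i)$ is uniform on $[0,1]$. Using the definition of the quantile function $F_i^{-1}(p)=\inf\{y:p\le F_i(y)\}$, the event $\{y_i\le F_i^{-1}(p)\}$ coincides with $\{F_i(y_i)\le p\}=\{U_i\le p\}$, whose probability is exactly $p$. Because the data are i.i.d., the indicators $\mathbb{I}\{U_i\le p\}$ are i.i.d.\ Bernoulli with mean $p$, and the law of large numbers gives $\frac{1}{N}\sum_{i=1}^N \mathbb{I}\{y_i\le F_i^{-1}(p)\}\to p$ for each fixed $p$, which is exactly Eqn.~\eqref{equ:thero1}. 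To secure the conclusion simultaneously for all $p\in[0,1]$ I would upgrade this pointwise convergence to uniform convergence by a Glivenko--Cantelli argument on the empirical distribution of $\{U_i\}$.

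The hard part will be bridging the gap between what the MMD loss actually matches and what calibration requires. The loss in Eqn.~\eqref{eqn:lm} compares the \emph{marginal} embeddings of the ground-truth targets $\{y_i\}$ and the predictive samples $\{\hat{y}_j\}$, whereas the calibration condition is a \emph{per-input} (conditional) statement about each $F_i$; marginal distribution matching is in general strictly weaker than conditional calibration. The step $P=Q\Rightarrow$ per-instance PIT uniformity therefore leans heavily on the hypothesis that $Q$ "has the sufficient ability to approximate $P$", and I would make this precise as conditional-distribution matching rather than mere marginal agreement. A secondary technical point is the interchange of limits: one must ensure that the consistency of the empirical MMD (an $N\to\infty$ statement) is compatible with the separate $N\to\infty$ limit in the calibration frequency, ideally by treating both within a single asymptotic regime.
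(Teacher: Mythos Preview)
Your proposal follows the same skeleton as the paper's proof: invoke the characteristic-kernel property to conclude $L_m(P,Q)=0\Rightarrow P=Q$, then argue that $P=Q$ delivers the calibration condition. The paper stops at the second step with a one-line assertion (``the confidence level $p$ is exactly equal to the proportion of samples $\{y_1,\dots,y_n\}$ covered by the prediction interval''), whereas you flesh this out via the probability integral transform, the law of large numbers, and a Glivenko--Cantelli upgrade --- details the paper omits but which are the natural way to make that sentence rigorous. The concern you raise in your final paragraph (marginal versus per-input matching, and the interplay of the two $N\to\infty$ limits) is genuine and is not addressed in the paper's proof either; the paper simply absorbs it into the standing hypothesis that $Q$ ``has the sufficient ability to approximate $P$''. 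In short, your argument is a faithful and more careful expansion of the paper's own proof rather than a different route, and your reservations about the marginal-versus-conditional gap are well-placed.
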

\begin{proof}
$L_{m}(P, Q) = 0$ if and only if $P = Q$ when $\mathcal{F}$ is a unit ball in a universal RKHS~\citep{gretton2012kernel}. Under $H_0: P = Q$, the predictive distribution $Q(x)$ will converge asymptotically to the unknown true distribution $P(x)$ as the sample size $N \rightarrow \infty$ by minimizing the MMD loss $L_m$. Further, Eqn.~\eqref{equ:thero1} holds according to the obtained predictive distribution.  Because the confidence level $p$ is exactly equal to the proportion of samples $\{y_1,\cdots,y_n\}$ covered by the prediction interval.
\end{proof}
This theoretical result can be generalized to the two side calibration condition defined in Eqn.~\eqref{eqn:cali_two} and we defer the details to Appendix \ref{appendix:two-sided theorem}.

\subsection{Calibrated Regression with MMD}
To represent the model uncertainty, we use a heteroscedastic neural network (HNN) to get the predictive distribution and outputs the predicted mean $\mu(x)$ and the variance $\sigma^2(x)$ in the final layer, which can combine epistemic uncertainty and aleatoric uncertainty in one model~\citep{nix1994estimating,Kendall}. 

Based on this representation model, we use a two-stage learning framework which optimizes the two objectives one by one, namely the negative log likelihood loss $L_{h}$ and the uncertainty calibration loss $L_{m}$. 
In the first stage, the optimal model parameters 
can be learned by minimizing negative log-likelihood loss~(NLL): 
\begin{equation}
{L}_{{h}}(\Theta) =\sum_{i=1}^{N}\frac{\log \sigma_{\Theta}^{2}({x_i})}{2}+\frac{\left(y_i-\mu_{\Theta}({x_i})\right)^{2}}{2 \sigma_{\Theta}^{2}({x_i})}+\text { constant }.
\end{equation}
In practice, to improve numerical stability, we optimize the following equivalent form:
\begin{equation}
    {L}_{{h}}(\Theta)=\sum_{i=1}^{N} \frac{1}{2} \exp \left(-s_{i}\right)\left(y_i-\mu_{\Theta}({x_i})\right)^{2}+\frac{1}{2} s_{i}, \quad s_{i}:=\log \sigma_{\Theta}^{2}(x_i).
\label{eqn:l_h}
\end{equation}
Although the Gaussian assumption is a bit restrictive above, we found that the method performs satisfactorily well in our experiments.
In the second stage, we minimize the uncertainty calibration loss with MMD, i.e., Eqn.~\eqref{eqn:lm}.

The overall procedure is two-stage:
\begin{equation}
\begin{array}{l}
\text { step 1: } \quad \min _{\Theta} {L}_{{h}}(\Theta ; y, {f}(x)),\\
\text { step 2: } \quad \min _{\Theta} {L}_{{m}}\left(\Theta ; y, {f}(x)\right),

\end{array}
\label{eqn:obj}
\end{equation} 
where $L_{m}$ is the loss function of distribution matching objective, and $L_{h}$ is the loss function of distribution estimation. We detail the whole process of the framework in Algorithm~\ref{alg:Framwork}.
The main merits of the two-stage learning are to 1) utilize the representation capability of the HNN model in the first stage and 2) learn the calibrated predictive distribution via the distribution matching strategy in the second stage. Compared with the bi-level learning algorithm used in~\cite{Kuleshov} which iterates the two stages for several times, our method runs with one time iteration of the two stages, which reduces the computation cost of the kernel-based MMD component.

\textbf{Comparison with Post-processing Calibration Methods}
The previous post-processing methods~\cite{Kuleshov,Thiagarajan,pearce2018high} calibrate the uncertainty outputs of the input dataset without any model modifications and needs to be retrained when the confidence level is changed. In contrast, the proposed distribution matching with MMD, albeit also regarded as a post-processing procedure, learns the calibrated predictive model, which means practitioners are not required to retrain the model but can enjoy the calibration performance.

\begin{algorithm}[htb] 
\caption{ Deep calibrated reliable regression model.} 
\label{alg:Framwork} 
\begin{algorithmic}[1] 
\REQUIRE ~~\\ 
Labeled training data and kernel bandwidth parameters \\
\ENSURE ~~\\ 
Trained mean $\mu(x_i)$ and variance $\sigma(x_i)$ for the predictive distribution

\WHILE{not converged}
\STATE Compute $\mu(x_i)$ and $\log \sigma(x_i)$
\STATE Compute NLL loss $L_h$ by Eqn.~\eqref{eqn:l_h}
\STATE Update model parameters $\Theta = \arg \min _{\Theta} \mathrm{L}_{{h}}(\Theta ; y, f(x))$ by SGD
\ENDWHILE
\WHILE{not converged}
\STATE Compute $\mu(x_i)$ and $\log \sigma(x_i)$, randomly sampling data $\{\hat{y_i}\}_{i=1}^{N}$ from predictive distrbution 
\STATE Compute MMD loss $L_m$ by Eqn.~\eqref{eqn:lm}
\STATE Update model parameters $\Theta = \arg \min _{\Theta} \mathrm{L}_{{m}}(\Theta ; y, f(x))$ by SGD
\ENDWHILE
\RETURN a trained model $f(x)$; 
\end{algorithmic}
\end{algorithm}

\section{Experiments}
In this section, we compare the proposed method with several strong baselines on the regression and time-series forecasting tasks in terms of predictive uncertainty. 
The time-series forecasting task models multiple regression sub-problems in sequence and the tendency along the sliding windows can be used to examine the obtained predictive uncertainty.
Then we show the sensitivity analysis and the time efficiency of our proposed method. 
\vspace{-0.3cm}
\subsection{Datasets and Experimental Settings}
\textbf{Baselines} We compare with several competitive baselines, including MC-Dropout~(MCD)~\citep{gal2016dropout}, Heteroscedastic Neural Network~(HNN)~\citep{Kendall}, Deep Ensembles~(Deep-ens)~\citep{Lakshminarayanan}, Ensembled Likehood~(ELL), MC-Dropout Likelihood~(MC NLL), Deep Gaussian Processes~(DGP)~\cite{salimbeni2017doubly} and the post-hoc calibration method using isotonic regression~(ISR)~\citep{Kuleshov}. ELL and MC NLL are our proposed variants inspired by Deep Ensemble. The variance of ELL is computed by predictions from multiple networks during the training phase, and the variance of MC NLL is computed by multiple random predictions based on MC-Dropout during the training phase. Details of these compared methods can be found in Appendix~\ref{sec:appendix-baseline}. 

\textbf{Hyperparameters} 
For all experimental results, we report the averaged results and std. errors obtained from 5 random trials. 
The details of hyperparameters setting can be found in Appendix \ref{appendix:hyperparameters}.

\textbf{Datasets} We use several public datasets from UCI repository~\cite{Dua:2019} and Kaggle~\cite{Yannis:2017}: 1) for the time-series task: Pickups, Bike-sharing, PM2.5, Metro-traffic and Quality; 2) for the regression task: Power Plant, Protein Structure, Naval Propulsion and wine. The details of the datasets can be found in Appendix~\ref{sec:appendix-dataset}. 

\textbf{Evaluation Metrics} 
We evaluate the performance using two metrics: 1) RMSE for the prediction precision; and 2) the calibration error. The calibration error is the absolute difference between true confidence and empirical coverage probability. We use two variants: the expectation of coverage probability error (ECPE) and the maximum value of coverage probability error (MCPE). We put the detailed definitions of the metrics in Appendix~\ref{appendix:metric}.

\vspace{-0.3cm}
\subsection{Results of Time-series Forecasting Tasks}
For time-series forecasting tasks, we construct an LSTM model with two hidden layers (128 hidden units and 64 units respectively) and a linear layer for making the final predictions. The size of the sliding window is 5 and the forecasting horizon is 1. Take the Bike-sharing dataset as an example, the bike sharing data of the past five hours will be used to predict the data of one hour in the future.
All datasets are split into 70\% training data and 30\% test data.

\begin{table}[ht]
    \centering
    \begin{tabular}{|l|l|l|l|l|l|l|l|l|l|}
\hline
\toprule[1pt]
Dataset & Metric\ &MCD & HNN  & Deep-ens & MC NLL \\ \midrule[1pt]
\multirow{2}{*}{Metro-traffic}
 & ECPE\ & 0.304$\pm$0.005 & 0.102$\pm$0.002 &0.100$\pm$0.001     &0.142$\pm$0.010 \\ 
 & MCPE\ &0.505$\pm$0.011 & 0.162$\pm$0.003 &0.160$\pm$0.002     &0.235$\pm$0.014 \\ 
 & RMSE\ &523.6$\pm$6.725 &556.3$\pm$3.332 &\textbf{508.9$\pm$1.288} &631.6$\pm$14.23 \\ 
 \hline
\multirow{2}{*}{Bike-sharing}
 & ECPE\ &0.258$\pm$0.011  &0.054$\pm$0.002  &0.038$\pm$0.001  &0.119$\pm$0.013\\ 
 & MCPE\ &0.432$\pm$0.020  &0.089$\pm$0.004  &0.066$\pm$0.008  &0.206$\pm$0.022\\
 & RMSE\  &38.86$\pm$0.141  &40.71$\pm$0.542   &\textbf{37.60$\pm$0.355}  &61.57$\pm$1.624 \\ 
 \hline
\multirow{2}{*}{Pickups} 
 & ECPE\ &0.246$\pm$0.017  & 0.078$\pm$0.001   & 0.064$\pm$0.006 & 0.088$\pm$0.016\\ 
 & MCPE\ &0.408$\pm$0.025  & 0.117$\pm$0.005   & 0.098$\pm$0.011 & 0.136$\pm$0.010\\ 
 & RMSE\ &350.3$\pm$6.562  &359.8$\pm$3.421  &336.4$\pm$1.653  & 526.8$\pm$9.214\\
 \hline
\multirow{2}{*}{PM2.5} 
 & ECPE\  &0.331$\pm$0.013  &0.022$\pm$0.001    &0.026$\pm$0.003  &0.081$\pm$0.010\\ 
 & MCPE\  &0.550$\pm$0.025  &0.050$\pm$0.004    &0.060$\pm$0.004  &0.151$\pm$0.027\\ 
 & RMSE\  &70.95$\pm$2.629  &58.81$\pm$0.372    &60.24$\pm$0.114    &66.77$\pm$3.613 \\ 
 \hline
\multirow{2}{*}{Air-quality}
 & ECPE\ &0.329$\pm$0.005  &0.058$\pm$0.003 &0.045$\pm$0.001 &0.111$\pm$0.004\\ 
 & MCPE\ &0.561$\pm$0.008  &0.091$\pm$0.006 &0.072$\pm$0.002 &0.178$\pm$0.004\\ 
 & RMSE\ &81.16$\pm$0.111  &\textbf{79.60$\pm$0.254}    &80.03$\pm$0.236   &87.12$\pm$0.971 \\ 
  \midrule[1pt] 
 Dataset & Metric & ELL & DGP & ISR & proposed\\  \midrule[1pt]
 \multirow{2}{*}{Metro-traffic}
 &ECPE &0.048$\pm$0.017 &0.115$\pm$0.007 &0.032$\pm$0.002 & \textbf{0.017$\pm$0.001} \\
 &MCPE &0.075$\pm$0.027 &0.192$\pm$0.013 &0.051$\pm$0.003 & \textbf{0.036$\pm$0.002}\\
 &RMSE &613.5$\pm$18.63 &646.4$\pm$0.302 &556.3$\pm$3.332 &545.5$\pm$4.225 \\ 
  \hline
 \multirow{2}{*}{Bike-sharing}
 & ECPE &0.027$\pm$0.003  &0.121$\pm$0.003  &0.042$\pm$0.002 &\textbf{0.006$\pm$0.001}\\ 
 & MCPE &0.048$\pm$0.055  &0.213$\pm$0.005  &0.066$\pm$0.005 &\textbf{0.019$\pm$0.002}\\
 & RMSE &52.50$\pm$2.901 &55.39$\pm$0.397 &40.71$\pm$0.542 &37.93$\pm$0.334\\ 
 \hline
\multirow{2}{*}{Pickups} 
 & ECPE & 0.018$\pm$0.008 &0.098$\pm$0.003 &0.049$\pm$0.002 &\textbf{0.008$\pm$0.001}\\
 & MCPE & 0.038$\pm$0.016 &0.160$\pm$0.005 &0.075$\pm$0.004 &\textbf{0.023$\pm$0.001}\\ 
 & RMSE &\textbf{325.9$\pm$11.23} &440.3$\pm$3.469  &359.8$\pm$3.421 &346.9$\pm$4.652\\
 \hline
\multirow{2}{*}{PM2.5} 
 & ECPE  &0.080$\pm$0.007  &0.061$\pm$0.006 &0.023$\pm$0.002 &\textbf{0.010$\pm$0.000}\\ 
 & MCPE  &0.119$\pm$0.011  &0.149$\pm$0.014 &0.057$\pm$0.006 &\textbf{0.035$\pm$0.003}\\ 
 & RMSE  &61.09$\pm$0.434  &61.44$\pm$2.113 &58.81$\pm$0.372 &\textbf{57.43}$\pm$0.332\\ 
 \hline
\multirow{2}{*}{Air-quality}
 & ECPE &0.018$\pm$0.005 &0.102$\pm$0.002 &0.030$\pm$0.001 &\textbf{0.010$\pm$0.001}\\ 
 & MCPE &0.04$\pm$0.008  &0.181$\pm$0.003 &0.044$\pm$0.005 &\textbf{0.026$\pm$0.001}\\ 
 & RMSE &90.01$\pm$0.566 &86.05$\pm$0.210 &79.60$\pm$0.254 &80.69$\pm$0.292\\ 
 \hline
\end{tabular}
    \caption{The forecast and calibration error scores of each method on different datasets. Each row corresponds to the results of a specific method in a particular metric.}
    \label{tab:my_label_cali_t}
\end{table}
\vspace{-0.15cm}
\begin{figure}[htbp]
\vspace{-0.5cm}
\centering
\subfigure[Dataset: Air Quality.]{
\begin{minipage}[t]{0.45\linewidth}
\centering
\includegraphics[width=1\linewidth]{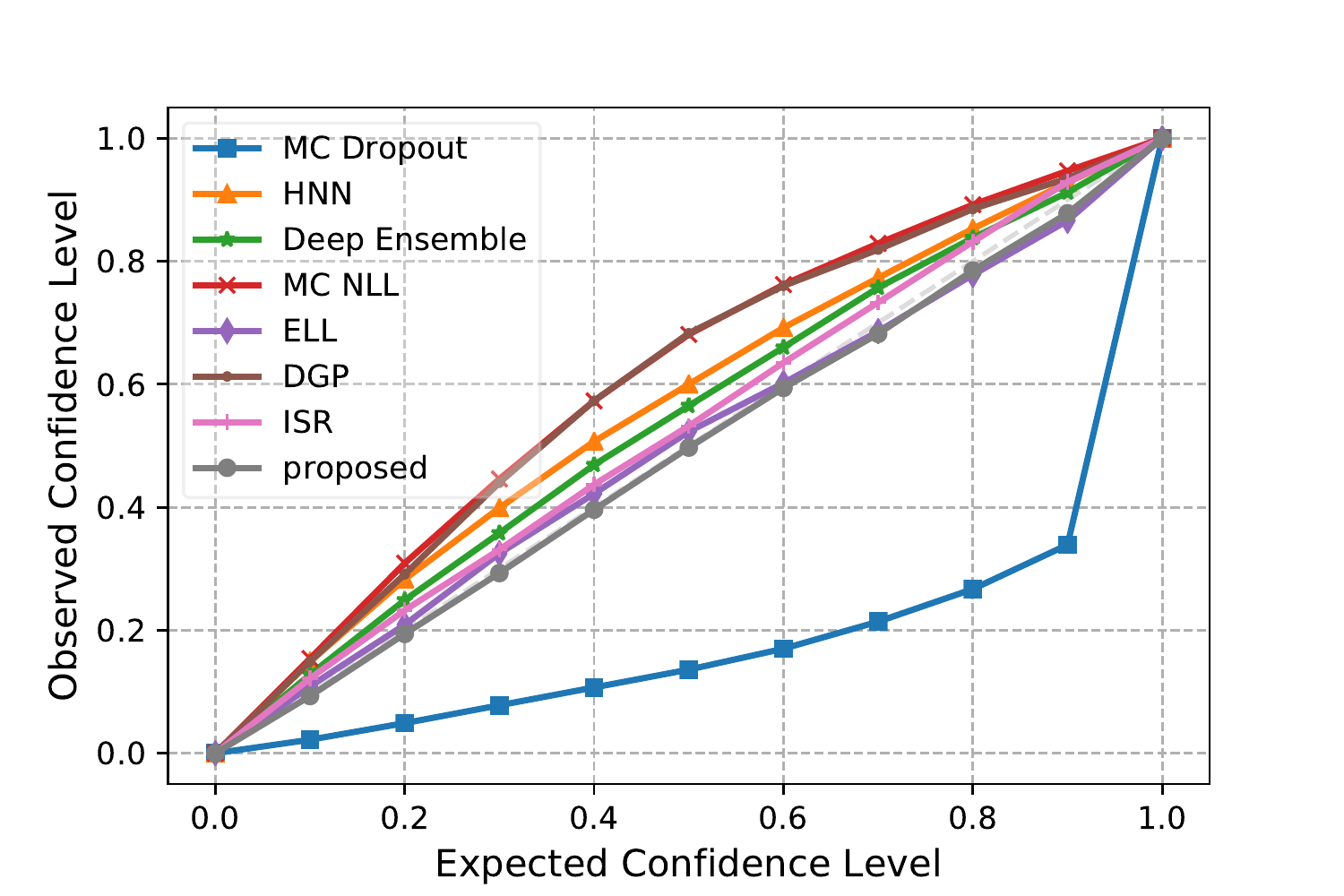}
\end{minipage}%
}%
\subfigure[Dataset: Bike Sharing.]{
\begin{minipage}[t]{0.45\linewidth}
\centering
\includegraphics[width=1\linewidth]{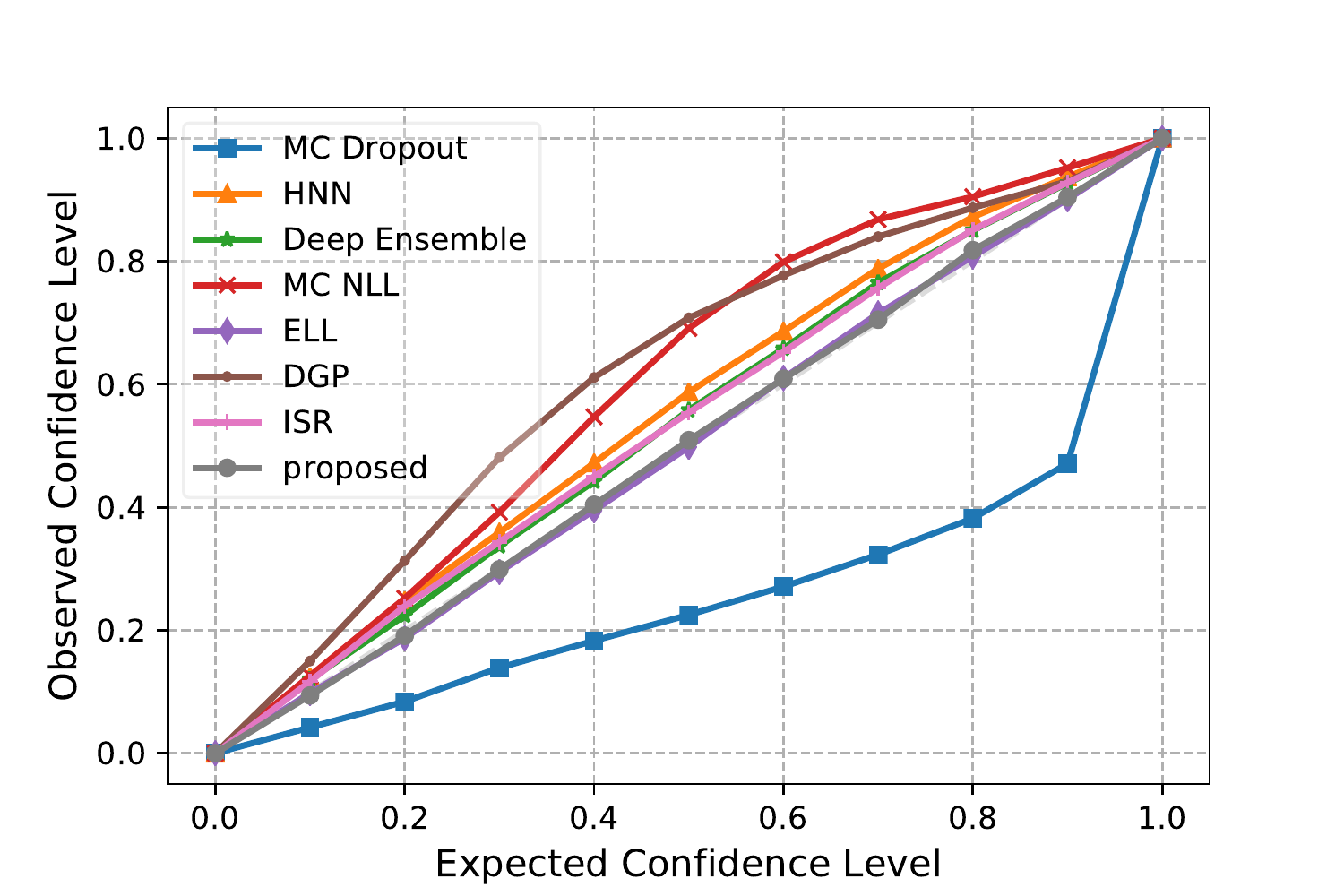}
\end{minipage}%
}
\centering
\vspace{-0.2cm}
\caption{For the time-series forecasting task, we plot the expected confidence vs observed confidence for all methods. The closer to the diagonal line, the uncertainty calibration is better. The results of other datasets can be found in Appendix.}
\vspace{-0cm}
\label{fig:cali_t_ce}
\end{figure}
Table \ref{tab:my_label_cali_t} present the results of all the methods, including the forecast and calibration errors.
We can see that our method with the MMD distribution matching strategy achieves the accurate forecasting results on par with the strong baselines in terms of RMSE\footnote{In Table \ref{tab:my_label_rse} in the Appendix, we also show the results in other metrics, such as $R^{2}$, SMAPE, etc., which have a similar conclusion.}, but significantly outperforms the baselines in the uncertainty calibration, in terms of ECPE and MCPE on all data-sets. Besides, we prefer prediction intervals as tight as possible while accurately covering the ground truth in regression tasks. We measure the sharpness using the width of prediction intervals, which is detailed in Appendix \ref{appendix:metric}. And 
our method also gets a relatively tighter prediction interval through the reported calibration sharpness from Table \ref{tab:my_label_acc_t1} in the Appendix. 
In addition, the ensemble method is second only to ours, due to the powerful ability of the ensemble of multiple networks. But when the network complexity is greater than the data complexity, the computation of the ensemble method is quite expensive, while our method can also be applied to more complex NNs. 
Figure \ref{fig:cali_t_ce} shows the proportion that PIs covering ground truths at different confidence levels. The result of our model is closest to the diagonal line, which indicates the best uncertainty calibration among all methods.
Figure~\ref{fig:cali_t_pi} shows the predictions and corresponding 95\% prediction intervals. The intervals are visually sharp and accurately cover the ground truths.

\begin{figure}
\vspace{-0.6cm}
\centering
\subfigure[Dataset: Air Quality.]{
\begin{minipage}[t]{0.48\linewidth}
\centering
\includegraphics[width=1\linewidth]{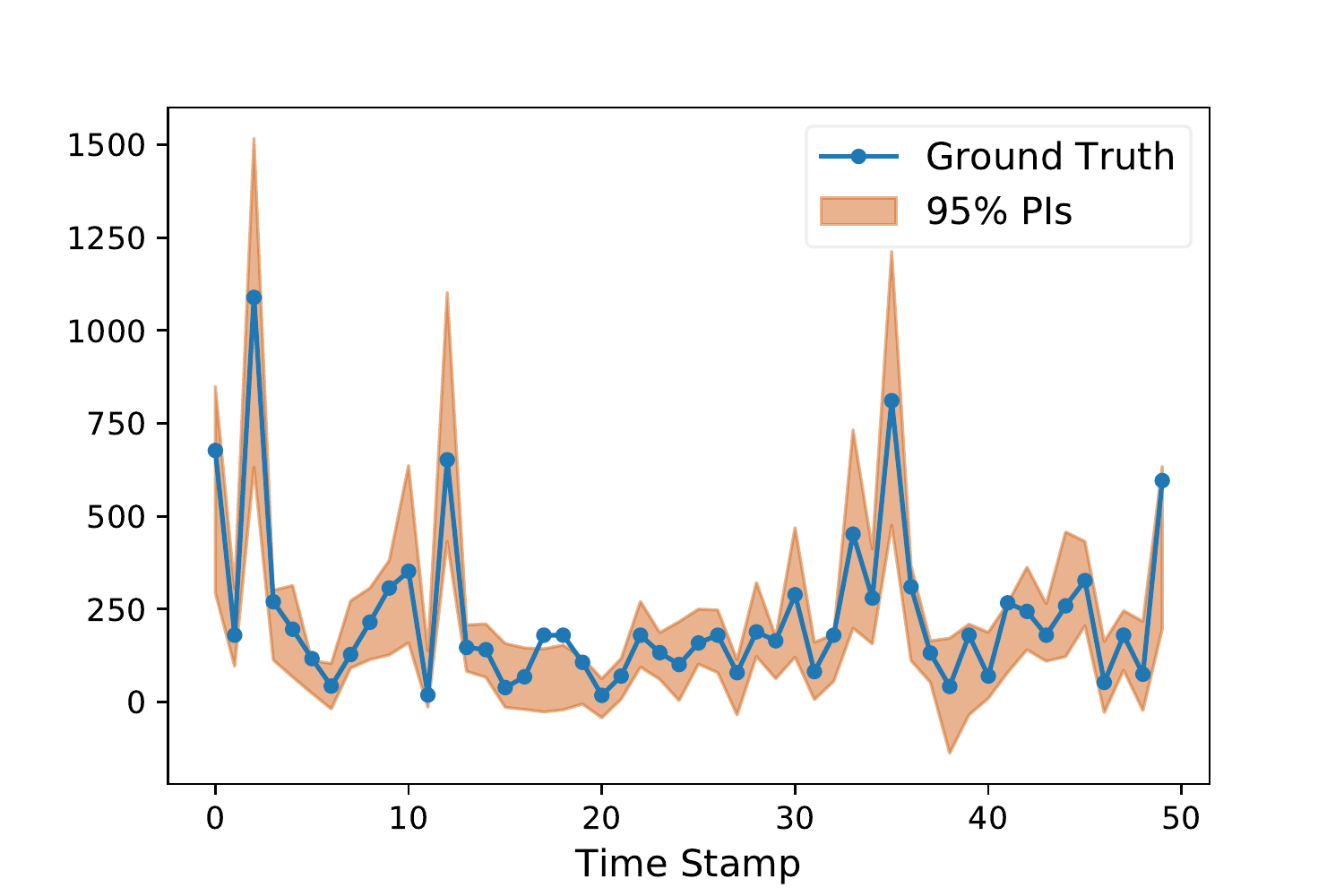}
\end{minipage}%
}%
\subfigure[Dataset: Bike Sharing.]{
\begin{minipage}[t]{0.48\linewidth}
\centering
\includegraphics[width=1\linewidth]{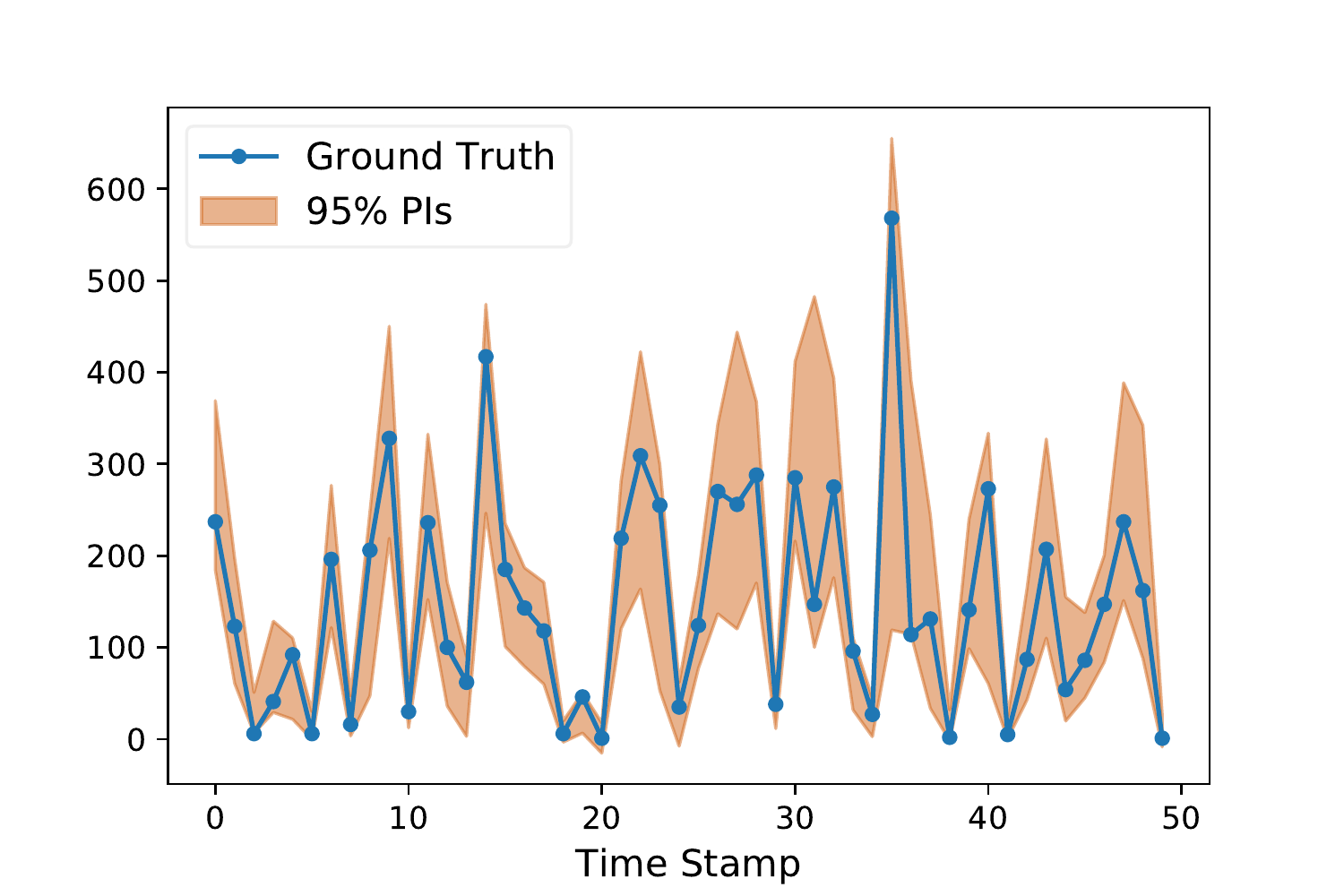}
\end{minipage}%
}
\centering
\caption{Calibrated forecasting: Displayed prediction intervals (PIs) obtained at the 95\% confidence level by our proposed method in a time-series. As shown in the figure, the prediction intervals are also sharp while accurately covering the ground truth. The results of other datasets can be found in Appendix.}
\vspace{-0.5cm}
\label{fig:cali_t_pi}
\end{figure}
\vspace{-0.3cm}
\subsection{Results of Regression Tasks}
\vspace{-0.3cm}
For regression tasks, we used a fully connected neural network with two hidden layers (256 hidden units) and each layer has a ReLU activation function. The size of our networks is close to the previous works~\cite{gal2016dropout, Kuleshov, Lakshminarayanan} on regression problems. We evaluate on four UCI datasets varying in size from 4,898 to 45,730 samples. We randomly split 80\% of each data-set for training and the rest for testing. Table \ref{tab:my_label_reg} presents the results of all methods, where we can draw similar 
conclusions as in the time-series forecasting tasks. The forecast results of our method is competitive in terms of RMSE and the calibration error of our method is significantly smaller than existing methods. Figure \ref{fig:cali_reg} reflects that the uncertainty calibration performance of each method at different confidence levels in general regression tasks and  we find that our method significantly improves calibration.
\begin{table}[ht] 
    \centering
    \begin{tabular}{|l|l|l|l|l|l|}
\hline
\toprule[1pt]
Dataset & Metric & MCD & HNN  & Deep-ens & MC NLL \\ \midrule[1pt]
\multirow{3}{*}{Power Plant}
 & ECPE &0.235$\pm$0.021 & 0.094$\pm$0.002  &0.084$\pm$0.004 &0.095$\pm$0.004  \\ 
 & MCPE &0.386$\pm$0.038 & 0.151$\pm$0.007  &0.142$\pm$0.001 &0.153$\pm$0.004 \\ 
 & RMSE &\textbf{3.792$\pm$0.171} &3.843$\pm$0.165 &3.945$\pm$0.150 &3.936$\pm$0.158  \\  
 \hline
\multirow{3}{*}{Protein Structure}
 & ECPE &0.365$\pm$0.011  &0.042$\pm$0.006  &0.049$\pm$0.001  &0.086$\pm$0.005   \\ 
 & MCPE &0.635$\pm$0.021  &0.071$\pm$0.005  &0.084$\pm$0.002  &0.138$\pm$0.005   \\ 
 & RMSE &\textbf{4.088$\pm$0.014}  &4.337$\pm$0.021  &4.255$\pm$0.010  &4.574$\pm$0.018 \\ 
 \hline
\multirow{3}{*}{Naval Propulsion} 
 & ECPE &0.175$\pm$0.077  & 0.038$\pm$0.006  & 0.270$\pm$0.016 & 0.216$\pm$0.042 \\ 
 & MCPE &0.283$\pm$0.116  & 0.065$\pm$0.007   & 0.431$\pm$0.025 & 0.344$\pm$0.047 \\ 
 & RMSE &0.001$\pm$0.000  & 0.001$\pm$0.000   & 0001$\pm$0.000  & 0.001$\pm$0.001 \\  
 \hline
\multirow{3}{*}{Wine} 
 & ECPE  &0.235$\pm$0.021  &0.041$\pm$0.003 &0.012$\pm$0.001 &0.046$\pm$0.006 \\ 
 & MCPE  &0.386$\pm$0.038  &0.082$\pm$0.013 &0.034$\pm$0.004 &0.095$\pm$0.011 \\ 
 & RMSE  &0.732$\pm$0.041  &0.705$\pm$0.038 &\textbf{0.672$\pm$0.040} &0.683$\pm$0.064 \\
 \hline
 \midrule[1pt]
 Dataset & Metric & ELL &DGP &ISR & proposed\\ 
 \midrule[1pt]
\multirow{3}{*}{Power Plant}
 & ECPE &0.019$\pm$0.025  &0.094$\pm$0.005 &0.062$\pm$0.003 & \textbf{0.007$\pm$0.001} \\ 
 & MCPE &0.035$\pm$0.037  &0.158$\pm$0.008 &0.105$\pm$0.003 & \textbf{0.024$\pm$0.003}\\ 
 & RMSE &4.186$\pm$0.184  &4.181$\pm$0.009 &3.843$\pm$0.165 &3.819$\pm$0.112 \\  
 \hline
\multirow{3}{*}{Protein Structure}
 & ECPE &0.038$\pm$0.009  &0.020$\pm$0.002 &0.014$\pm$0.006 &\textbf{0.006$\pm$0.000}\\ 
 & MCPE &0.075$\pm$0.016  &0.036$\pm$0.004 &0.027$\pm$0.010 &\textbf{0.024$\pm$0.002}\\ 
 & RMSE &4.519$\pm$0.019  &4.950$\pm$0.011 &4.337$\pm$0.021 &4.556$\pm$0.012\\  
 \hline
\multirow{3}{*}{Naval Propulsion} 
 & ECPE & 0.059$\pm$0.034 &0.115$\pm$0.007 &0.021$\pm$0.003 &\textbf{0.012$\pm$0.001}\\ 
 & MCPE & 0.117$\pm$0.051 &0.192$\pm$0.012 &0.036$\pm$0.010 &\textbf{0.030$\pm$0.004}\\ 
 & RMSE & 0.002$\pm$0.001 &0.001$\pm$0.000 &0.001$\pm$0.000 &\textbf{0.001$\pm$0.000}\\  
 \hline
\multirow{3}{*}{Wine} 
 & ECPE  &0.073$\pm$0.009  &0.178$\pm$0.003 &0.083$\pm$0.006 &\textbf{0.008$\pm$0.002}\\ 
 & MCPE  &0.103$\pm$0.011  &0.300$\pm$0.006 &0.127$\pm$0.008 &\textbf{0.024$\pm$0.004}\\ 
 & RMSE  &0.684$\pm$0.061  &0.754$\pm$0.031 &0.705$\pm$0.038 &0.705$\pm$0.035\\
 \hline
\end{tabular}
    \caption{The calibration error scores of uncertainty evaluation and RMSE for each method on different datasets, each row has the results of a specific method in a particular metric. Our method improves calibration and outperforms all baselines}
    \label{tab:my_label_reg}
\end{table}
\vspace{-0.3cm}
\begin{figure}[htbp]
\vspace{-0.7cm}
\centering
\subfigure[Dataset: Naval Propulsion.]{
\begin{minipage}[t]{0.48\linewidth}
\centering
\includegraphics[width=1\linewidth]{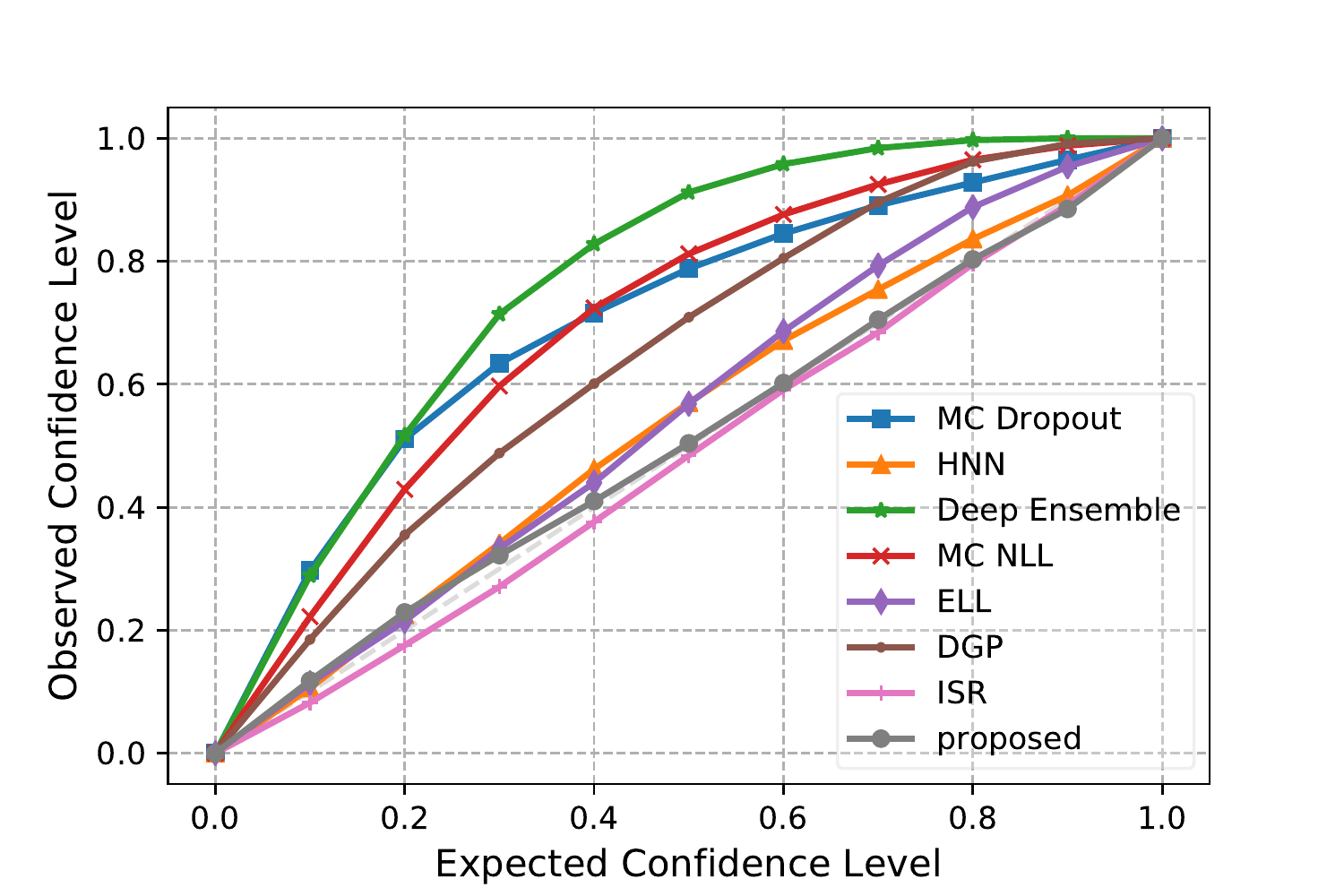}
\end{minipage}%
}%
\subfigure[Dataset: Protein Structure.]{
\begin{minipage}[t]{0.48\linewidth}
\centering
\includegraphics[width=1\linewidth]{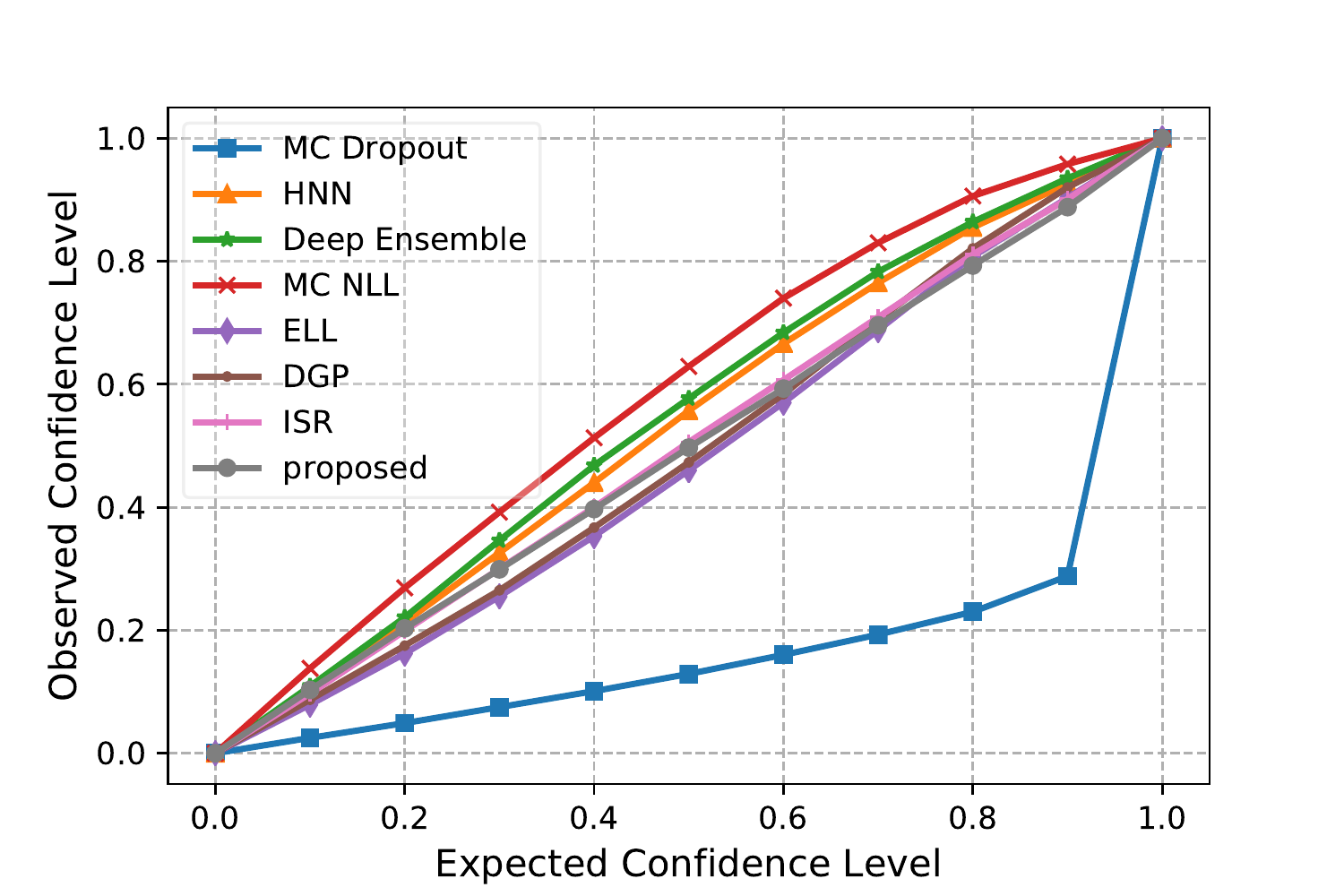}
\end{minipage}%
}
\centering
\caption{For the regression task,we plot the expected confidence vs observed confidence for all methods. The closer to the diagnoal line, the uncertainty calibration is better. The results of other datasets can be found in Appendix.}
\label{fig:cali_reg}
\end{figure}
\vspace{-0.3cm}
\subsection{Computation Efficiency}
\vspace{-0.25cm}
We analyze the time complexity on the type of methods that generate the uncertainty distribution,
these methods are relatively computationally expensive: DGP, Deep Ensembles, ELL and our proposed method. For the regression task, these four methods use the same network structure with a fully connected neural network (256 hidden units) at each hidden layer. The training and inference of DGP is performed using a doubly stochastic variational inference algorithm~\cite{salimbeni2017doubly}. As can be seen in Figure~\ref{fig:eff}, DGP is the most time-consuming, the training time increases almost linearly as the number of network layers increases. The computation time of our method is the least among all methods when model complexity becomes higher, and can also keep low calibration error. This result coheres our argument that our method is not computationally expensive compared to the baseline methods. 

\vspace{-0.3cm}
\begin{figure}[htbp]
\centering
\subfigure[Computation time.]{
\begin{minipage}[t]{0.45\linewidth}
\centering
\includegraphics[width=1\linewidth]{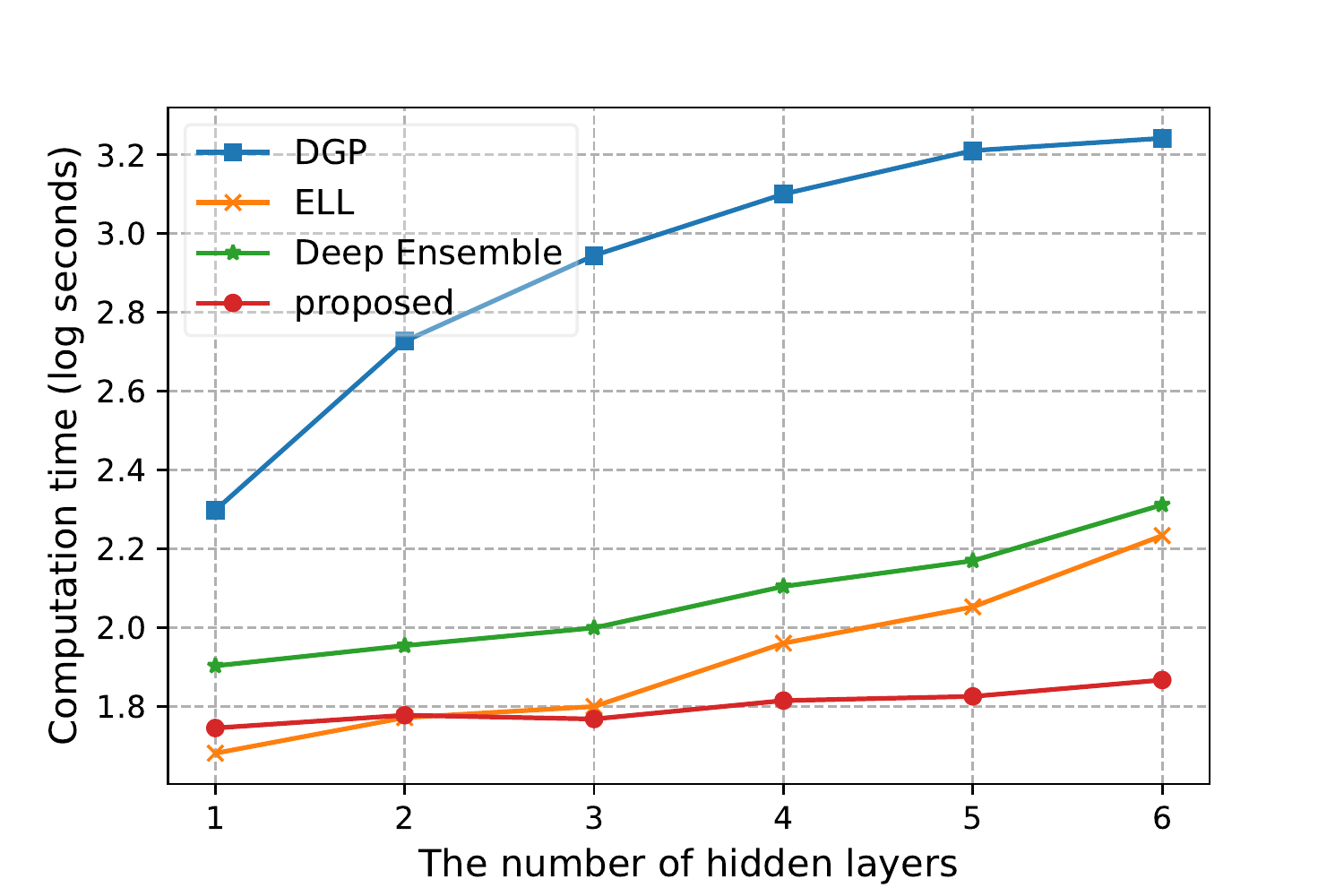}
\end{minipage}%
}%
\subfigure[Calibration errror.]{
\begin{minipage}[t]{0.45\linewidth}
\centering
\includegraphics[width=1\linewidth]{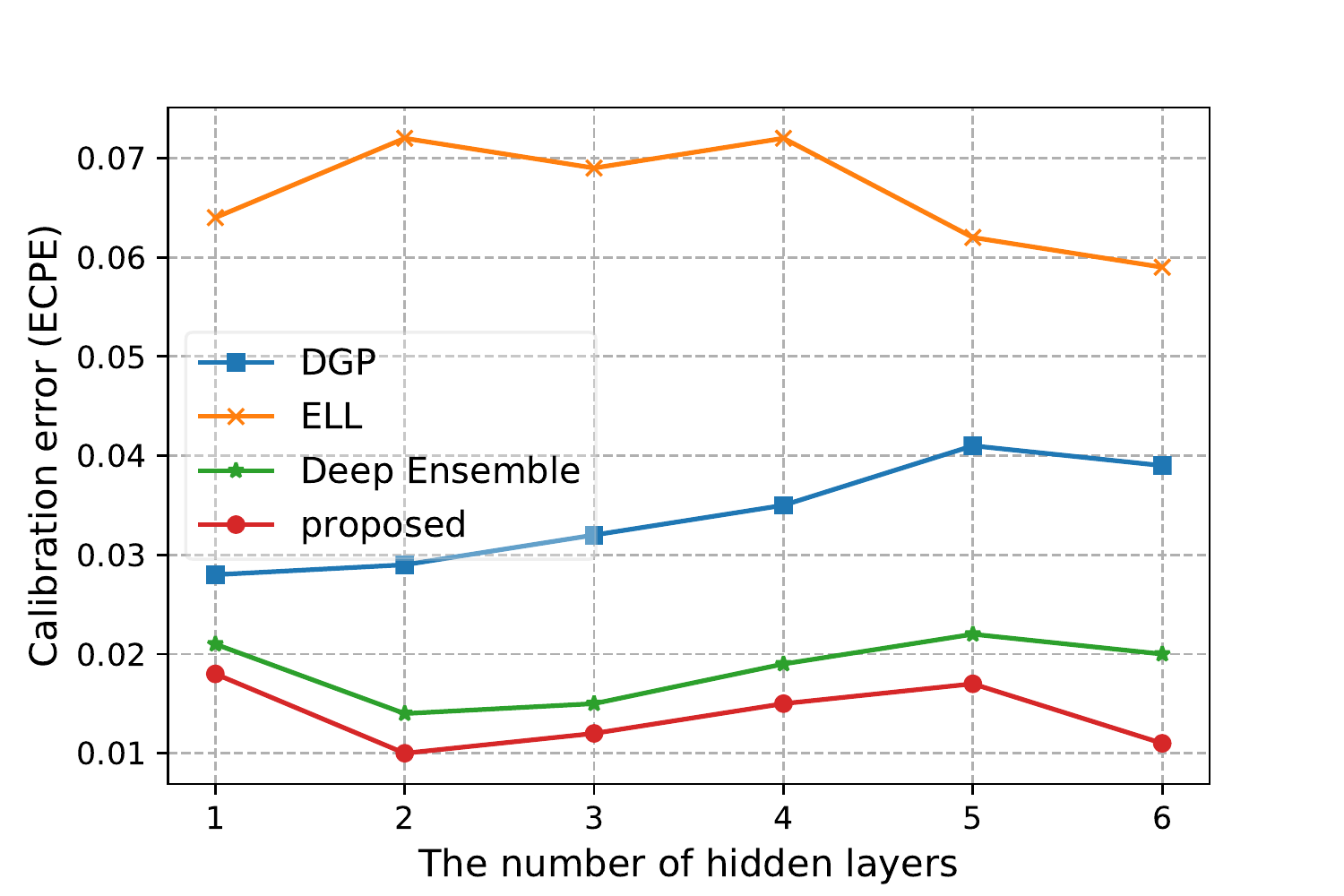}
\end{minipage}%
}
\centering
\caption{The computation time (log seconds) of four methods during model training phase (left) and calibration error of different models (right) on the wine dataset on GTX1080Ti. We can see that our method is also effective in computing efficiency and calibration for more complex models.}
\label{fig:eff}
\end{figure}
\vspace{-0.3cm}
\section{Conclusion and Discussions}
We present a flexible and effective uncertainty calibration method with the MMD distribution matching strategy for regression and time-series forecasting tasks.
Our method is guaranteed to produce well-calibrated predictions given sufficient data under mild assumptions. 
Extensive experimental results show that our method can produce reliable predictive distributions, and obtain the well-calibrated and sharp prediction intervals.

There are several directions for future investigation. Firstly, the Gaussian likelihood may be too-restrictive sometimes and one could use a mixture distribution or a complex network, e.g., mixture density network~\citep{bishop1994mixture} as a base model.
Secondly, our calibration strategy can be extended to classification tasks. But the challenge we need to overcome is the impact of batch-size and binning on the performance of MMD.
Thirdly, the kernels used in the MMD definition can be defined on other data structures, such as graphs and time-series~\cite{hofmann2008kernel}. 
Finally, it is interesting to investigate on the sample size for a given task. Specifically, we provide an asymptotic analysis on well-calibration, while in practice we only have finite data. \cite{gretton2012kernel} shows that MMD has performance guarantees at finite sample sizes, based on uniform convergence bounds. For our method, regardless of whether or not $p=q$, the empirical MMD converges in probability at rate $O((m+n)^{-\frac{1}{2}})$ to its population value, where $m$ and $n$ respectively represent the number of samples sampled from $P$ and $Q$. So a further investigation on the bound of our method is worth considering in the future work.


\section*{Statement of Potential Broader Impact}
Uncertainty exists in many aspects of our daily life, which plays a critical role in the application of modern machine learning methods. 
Unreliable uncertainty quantification may bring \textit{safety and reliability} issues in these applications like medical diagnosis, autonomous driving, and demand forecasting.
Despite deep learning has achieved impressive accuracies on many tasks, NNs are poor to provide accurate predictive uncertainty.
Machine learning models should provide accurate confidence bounds (i.e., uncertainty estimation) on these safety-critical tasks.

This paper aims to solve the problem of inaccurate predictive quantification for regression models. Our method produces the well-calibrated predictive distribution while achieving the high-precision forecasting for regression tasks, and naturally generate reliable prediction intervals at any confidence level we need. 

Our proposal has a positive impact on a variety of tasks using the regression models. For example, our proposed model produces more accurate demand forecasting based on the historical sales data for a retail company, which can calculate the safety stock to make sure you don't lose customers. We believe that it is necessary to consider the uncertainty calibration for many machine learning models, which will improve the \textit{safety and reliability} of machine learning and deep learning methods.

\section*{Acknowledgement}
We would like to thank the anonymous reviewers for their useful comments, especially for review 1 and review 3. Part of this work was done when the first two authors were working at RealAI. This work was supported by the National Key Research and Development Program of China (No.2017YFA0700904), NSFC Projects (Nos. 61620106010, U19B2034, U1811461), Beijing Academy of Artificial Intelligence (BAAI), Tsinghua-Huawei Joint Research Program, a grant from Tsinghua Institute for Guo Qiang, Tiangong Institute for Intelligent Computing, and the NVIDIA NVAIL Program with GPU/DGX Acceleration.

\bibliography{ref}

\newpage
\appendix
\section{Two-sided Calibration Theorem}
\label{appendix:two-sided theorem}
\begin{theorem}
Suppose that the predictive distribution $Q$ has the sufficient ability to approximate the true unknown distribution $P$, given data is i.i.d. Eqn.~\eqref{equ:thero2} holds by minimizing the MMD loss $L_{\mathrm{m}}=\left\|\mu_{x_1}-\mu_{x_2}\right\|_{\mathcal{F}}$ in our proposed methodology as the sample size $N \rightarrow \infty$
\begin{equation}
\frac{\sum_{i=1}^{N} \mathbb{I}\left\{F_{i}^{-1}\left(p_{1}\right) \leq y_{i} \leq F_{i}^{-1}\left(p_{2}\right)\right\}}{N} \rightarrow p_{2}-p_{1} \quad\text { for all } p_1,p_2 \in[0,1]
\label{equ:thero2}
\end{equation}
\end{theorem}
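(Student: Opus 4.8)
The plan is to reduce the two-sided statement to two applications of the one-sided calibration result already established in Theorem~1, exploiting the fact that a two-sided coverage event is a difference of two one-sided coverage events. First I would invoke the core consequence of minimizing the MMD loss: since $L_{m}(P,Q)=0$ if and only if $P=Q$ for a unit ball in a universal RKHS, driving $L_{m}\to 0$ forces the predictive distribution $Q$ to converge asymptotically to the true distribution $P$ as $N\to\infty$. Consequently each predictive CDF $F_i$ coincides in the limit with the true conditional CDF, so the one-sided conclusion \eqref{equ:thero1} holds for every $p\in[0,1]$.

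Next I would decompose the two-sided indicator. Without loss of generality take $p_1\le p_2$; by monotonicity of the quantile function $F_i^{-1}$ we have $F_i^{-1}(p_1)\le F_i^{-1}(p_2)$, and therefore
\begin{equation}
\mathbb{I}\left\{F_{i}^{-1}(p_1)\le y_i\le F_{i}^{-1}(p_2)\right\}
= \mathbb{I}\left\{y_i\le F_{i}^{-1}(p_2)\right\} - \mathbb{I}\left\{y_i < F_{i}^{-1}(p_1)\right\}.
\end{equation}
Summing over $i$ and dividing by $N$ splits the empirical two-sided coverage into the difference of two empirical one-sided coverages. Applying the one-sided result \eqref{equ:thero1} to each term, the first average converges to $p_2$ and the second to $p_1$, so by linearity of the limit the left-hand side of \eqref{equ:thero2} tends to $p_2-p_1$.

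The main obstacle is the boundary mismatch between the strict inequality $y_i < F_{i}^{-1}(p_1)$ appearing in the decomposition and the non-strict inequality $y_i\le F_{i}^{-1}(p_1)$ to which Theorem~1 applies. I would resolve this by using the continuity of the predictive CDF: under the Gaussian (HNN) parameterization adopted in this paper, $F_i$ has no atoms, so the event $\{y_i = F_{i}^{-1}(p_1)\}$ has probability zero and the strict and non-strict events coincide almost surely. Hence both averages share the same limit $p_1$, and this absolute-continuity assumption is precisely what validates the difference-of-limits step. A remaining minor point to state explicitly is the interchange that lets the limit of a difference equal the difference of limits, which is immediate here since each term converges individually.
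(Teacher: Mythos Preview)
Your argument is correct. The route differs from the paper's: the paper's proof does not reduce the two-sided statement to Theorem~1 via an indicator decomposition, nor does it address the strict-versus-non-strict boundary issue. It simply repeats the MMD characterization ($L_m(P,Q)=0$ iff $P=Q$ in a universal RKHS), asserts that minimizing $L_m$ drives $Q\to P$ as $N\to\infty$, and then states directly that once the predictive distribution matches the true one, the coverage proportion for the interval $[F_i^{-1}(p_1),F_i^{-1}(p_2)]$ equals $p_2-p_1$. Your decomposition into two one-sided terms and the explicit handling of the atomless case under the Gaussian HNN parameterization make the passage from $Q=P$ to the calibration conclusion rigorous, whereas the paper treats that step as immediate. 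The paper's version is shorter but leaves the law-of-large-numbers step and the boundary technicality implicit; yours is longer but actually justifies them.
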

\begin{proof}
$L_{m}(P, Q) = 0$ if and only if $P = Q$ when $\mathcal{F}$ is a unit ball in a universal RKHS~\citep{gretton2012kernel}. Under $H_0: P = Q$, the predictive distribution $Q(x)$ will converge asymptotically to the unknown true distribution $P(x)$ as the sample size $N \rightarrow \infty$ by minimizing the MMD loss $L_m$. Further, Eqn.~\eqref{equ:thero2} holds according to the obtained predictive distribution.  Because the confidence level $p_2 - p_1$ is exactly equal to the proportion of samples $\{y_1,\cdots,y_n\}$ covered by the two-sided prediction interval.
\end{proof}


\section{Experimental Setting}
\subsection{Baselines}
\label{sec:appendix-baseline}
\begin{enumerate}[ leftmargin = 0pt, itemindent = 1em]
    \item[$\bullet$] MC-Dropout (MCD)~\citep{gal2016dropout}: A variant of standard dropout, named as Monte-Carlo Dropout. Interpreting dropout in deep neural networks as approximate Bayesian inference in deep Gaussian process. Epistemic uncertainties can be quantified with a Monte-Carlo sampling sample by using dropout during the test phase in the network without changing NNs model itself. For all experiments, the dropout probability was set at 0.3. The conventional MSE loss is used in this method.
    \item[$\bullet$]  Heteroscedastic Neural Network (HNN)~\citep{Kendall}: In this approach, similar to a heteroscedastic regression, the network has two outputs in the last layer, corresponding to the predicted mean and variance for each input $x_i$. HNN is trained by minimizing the negative log-likelihood loss~(NLL). Epistemic uncertainty and aleatoric uncertainty can be captured by using MC-Dropout.
    \item[$\bullet$]  Deep Ensembles~\citep{Lakshminarayanan}: A simple and scalable ensembled method, here referred to as Deep-ens. In this approach, predictive uncertainty is estimated by training multiple heteroscedastic neural networks independently. Each HNN is trained with the entire training dataset. In the end, the predictive distribution is considered a uniformly-weighted Gaussian mixture. For simplicity of computation, the distribution is regarded as a Gaussian for each input $x_i$. The mean and variance of a mixture $M^{-1} \sum \mathcal{N}\left(\mu_{\Theta_{m}}(\mathbf{x_i}), \sigma_{\Theta_{m}}^{2}(\mathbf{x_i})\right)$ are given by $\mu_{*}(\mathbf{x_i})=M^{-1} \sum_{m} \mu_{\Theta_{m}}(\mathbf{x_i})$ and $\sigma_{*}^{2}(\mathbf{x_i})=M^{-1} \sum_{m}\left(\sigma_{\Theta_{m}}^{2}(\mathbf{x_i})+\mu_{\Theta_{m}}^{2}(\mathbf{x_i})\right)-\mu_{*}^{2}(\mathbf{x_i})$ respectively, where $\{\Theta_m\}_{m=1}^{M}$ represent the parameters of the ensemble model. Hence the prediction intervals can be calculated by the CDF of Gaussian distribution.
    \item[$\bullet$]  Ensembled Likelihood~(ELL): Inspired by deep ensemble, we jointly train k networks to minimize ensembled likelihood loss~(ELL) by gradient descent algorithm, e.g. SGD. Where $\mu_i = \frac{1}{k}\sum_{i = 1}^k f_1(x_i) + f_2(x_i) + ... + f_k(x_i), \sigma_{i}^2 = \frac{1}{k}\sum_{i = 1}^k (f_i(x_i) - \mu_i)^2$. Note that, there are interactions across these $k$ networks, or just look at the standard deviation which is computed across predictions from different networks. This is different from typical heteroskedastic models (trained with Eqn.~\eqref{eq:lbnn-hetero-normal}), where the noise std.dev. comes from a single network, and it's also different from standard deep ensembles, in which you can decompose the loss across different networks (thus the networks do not interact at all during training, training networks independently). Such lack of interaction might be wasting capacity.
    \begin{equation}
    {L}_{{h}}(x_i,y_i)=\sum_{i=1}^{N} \frac{1}{2} \exp \left(-s_{i}\right)\left(y_i-\mu_{\Theta}({x_i})\right)^{2}+\frac{1}{2} s_{i}, \quad s_{i}:=\log \sigma_{\Theta}^{2}(x_i).
    \label{eq:lbnn-hetero-normal}
    \end{equation}
    \item[$\bullet$] MC-Dropout Likelihood (MC NLL): We designed a method for combining MC-Dropout and ensemble in a single network. The NNs perform Monte-Carlo sampling before each back-propagation during the training phase. Similar to the ELL method above, we can calculate the mean and variance of the Monte-Carlo sample. Furthermore, injecting the mean and variance into negative log-likelihood (NLL) loss to perform back-propagation.
    \item[$\bullet$] Deep Gaussian Processes (DGP): Deep Gaussian processes (DGPs)~\cite{damianou2013deep}, a Bayesian inference method, are multi-layer generalizations of Gaussian processes(GPs), the data is modeled as the output of a multivariate GP, where training and inference is performed using the method of ~\cite{salimbeni2017doubly} that can be used effectively on the large-scale data. We apply the DGP with two hidden layers and one output layer on all data-sets in our experiment.
    \item[$\bullet$] Isotonic regression (ISR): A simple non-parametric post-processing calibration method~\cite{Kuleshov}, which can recalibrate any regression algorithm similar to Platt scaling for classification. It is a separate auxiliary model to calibrate the probability output for a pre-train model and does not affect the original prediction of the model.
\end{enumerate}
\subsection{Hyperparameters}
\label{appendix:hyperparameters}
Since the model structure is universal for all methods, we adjust the same optimal hyperparameters on the training data. 
Finally, we use the Adam~\citep{kingma2014adam} algorithm for the optimization with learning rate $10^{-4}$ and weight decay $10^{-3}$. For the kernel function of MMD, we use a mixture of six RBF
kernels $k\left(x, x^{\prime}\right)=\sum_{i=1}^{6} k_{\sigma_{i}}\left(x, x^{\prime}\right)$ with $\sigma_i$ to be $\{1, 4, 8, 16, 32, 64\}$ in all experiments. For data preprocessing, we scale the data into the range [0,1] to avoid extreme values and improve the computation stability.
\subsection{Datasets}
\label{sec:appendix-dataset}
\begin{enumerate}[ leftmargin = 0pt, itemindent = 1em]
\item[$\bullet$] Bike Sharing~\url{https://archive.ics.uci.edu/ml/datasets/bike+sharing+dataset}: This dataset contains the hourly and daily count of rental bikes between years 2011 and 2012 in Capital bikeshare system with the corresponding weather and seasonal information at Washington, D.C., USA.
\item[$\bullet$] Uber-pickup~\url{https://www.kaggle.com/yannisp/uber-pickups-enriched}: This is a forked subset of the Uber Pickups in New York City from 01/01/2015 to 30/06/2015 from Kaggle, enriched with weather, borough, and holidays information.
\item[$\bullet$] PM2.5~\url{https://archive.ics.uci.edu/ml/datasets/Beijing+PM2.5+Data}: This hourly data set contains the PM2.5 data of US Embassy in Beijing, time period is between Jan 1st, 2010 to Dec 31st, 2014. Missing data are filled by linear interpolation.
\item[$\bullet$] Metro-traffic~\url{https://archive.ics.uci.edu/ml/datasets/Metro+Interstate+Traffic+Volum}: Metro Interstate traffic volume dataset, hourly Interstate 94 Westbound traffic volume for MN DoT ATR station 301, roughly midway between Minneapolis and St Paul, MN from 2012-2018. Hourly weather features and holidays included for impacts on traffic volume.
\item[$\bullet$] Air-quality~\url{https://archive.ics.uci.edu/ml/datasets/Air+Quality}: The dataset contains 9358 instances of hourly averaged responses from an array of 5 metal oxide chemical sensors embedded in an Air Quality Chemical Multisensor Device. The device was located on the field in a significantly polluted area, at road level,within an Italian city. Data were recorded from March 2004 to February 2005 (one year)representing the longest freely available recordings of on field deployed air quality chemical sensor devices responses.
\end{enumerate}
\begin{table}[ht]
    \centering
    \begin{tabular}{cccc}
    \hline
    Datasets& L & D & T\\
    \hline
    Uber-pickups & 29102& 11& 1 hour\\
    Bike-sharing & 17389& 16& 1 hour\\
    PM2.5 & 43824& 13& 1 hour\\
    Metro-traffic & 48204& 9& 1 hour\\
    Air Quality & 9358 &15 &1 hour\\
    Power Plant & 9568 &4 &nan\\
    Protein Structure &45730 &9 &nan\\
    Naval Propulsion &11934 &16 &nan\\
    wine &4898 &12 &nan\\
    \hline
    \end{tabular}
    \caption{The description of dataset used, where L is length of time-series or data size, D is number of variables, T is time interval among series.}
    \label{tab:label_dataset}
\end{table}
\section{Metric Description}
\label{appendix:metric}
\subsection{The Metric of Prediction Precision}
\label{appendix:metric_prec}
We also use RSE~\citep{lai2018modeling} and SMAPE~\citep{tofallis2015better} to quantify the prediction accuracy in addition to commonly used RMSE and $R^2$ for time-series forecasting tasks. Root relative squared error (RSE), that can be regarded as RMSE divided by the standard deviation of test data. Compared to RMSE, RSE is more readable because it can ignore the influence of data scale and it is able to recognize outlier prediction results. So lower RSE value is better. Where ${y}, \hat{{y}} \in \mathbb{R}^{n \times T}$ are ground truth and prediction value respectively.
\begin{equation}
RSE=\frac{\sqrt{\sum_{(i, t) \in \Omega_{\text{Test}}\left(y_{i t}-\hat{y}_{i t}\right)^{2}}}}{\sqrt{\sum_{(i, t)\in \Omega_{\text{Test}}\left(y_{it}-\operatorname{\mu}(y)\right)^{2}}}}
\end{equation}
Symmetric mean absolute percentage error (SMAPE or sMAPE) is an accuracy measure based on percentage errors. The absolute difference between $y_i$ and $\hat{y}_i$ is divided by half the sum of absolute values of the actual value At and the forecast value $F_i$. The value of this calculation is summed for every fitted point $i$ and divided again by the number of fitted points $n$. Where $y_i, \hat{y}_i$ are the ground truth and prediction value respectively.
\begin{equation}
\mathrm{SMAPE}=\frac{100 \%}{n} \sum_{i=1}^{n} \frac{\left|\hat{y}_i - y_i\right|}{\left(\left|y_i\right|+\left|\hat{y}_i\right|\right) / 2}
\end{equation}
\subsection{The Metric of Calibration}
\label{appendix:metric_cali}
Different from quantifying calibration in classification tasks, such as Brier Score~\citep{brier1950verification}, Reliability Diagrams~\citep{degroot1983comparison} and Expected Calibration Error (ECE)~\citep{naeini2015obtaining}, the calibration error is usually quantified by prediction intervals for regression tasks. In order to quantitatively evaluate the accuracy of predictive uncertainty, we use the numerical score of calibration error as an metric similar to the diagnostic tool proposed by \citep{Kuleshov}. Because the probability value is less than 1, in order to better distinguish the performance of calibration, here we use the absolute distance between expected confidence and observed confidence different from \citep{Kuleshov}. 

\textbf{Calibration error}. We designed two metrics, ECPE and MCPE to quantitatively evaluate our experiments. The expectation of coverage probability error (ECPE) of prediction intervals (PIs) is the absolute difference between true confidence and empirical coverage probability. Relatively, the maximum value of coverage probability error (MCPE) of prediction intervals(PIs) is the maximum distance.
\begin{equation}
\setlength{\abovedisplayskip}{3pt}
\setlength{\belowdisplayskip}{3pt}
    ECPE = \frac{1}{n}\sum_{j = 1}^n |P_j - \hat{P}_j|
\end{equation}
\begin{equation}
\setlength{\abovedisplayskip}{3pt}
\setlength{\belowdisplayskip}{3pt}
    MCPE = \max |P_j - \hat{P}_j|
\end{equation}
Where $P_j$ is the expected confidence (i.e., the confidence level that we expect), and $\hat{P}_j$ is probability that prediction intervals cover the ground truth.

\textbf{Sharpness}. Another important aspect for evaluating calibration is the sharpness. 
We prefer prediction intervals as tight as possible while accurately covering the ground truth in regression tasks. We measure the sharpness using EPIW and MPIW. The expectation of prediction interval widths (EPIW) is averaged width of PIs, the maximum of prediction interval widths (MPIW) is the maximum width of PIs, reflecting the degree of uncertainty.
\begin{equation}
\setlength{\abovedisplayskip}{3pt}
\setlength{\belowdisplayskip}{3pt}
    EPIW = \frac{1}{n}\sum_{j = 1}^n \hat{Y}_{j{up}} - \hat{Y}_{j{low}}
\end{equation}
\begin{equation}
\setlength{\abovedisplayskip}{3pt}
\setlength{\belowdisplayskip}{3pt}
    MPIW = \max (\hat{Y}_{j{up}} - \hat{Y}_{j{low}})
\end{equation}
Where $\hat{Y}_{jup}, \hat{Y}_{jlow}$ are the upper and lower bounds of prediction intervals respectively. 
\begin{table}[ht]
    \centering
    \begin{tabular}{|l|l|l|l|l|l|l|l|}
\hline
Dataset & Metric & HNN  & Deep-ens & MC NLL & ELL &DGP & proposed\\ \hline
\multirow{2}{*}{Metro-traffic}
 & EPIW   &786.42    &788.03    &1416.62  &826.18   &1155.69 & \textbf{776.05}\\  
 & MPIW   &2564.12   &2569.37   &4618.86  &2693.74  &3768.11 & \textbf{2530.30}\\ \hline
\multirow{2}{*}{Bike-sharing}
 & EPIW &53.88    &53.85  &103.31  &72.95  &102.62 &56.38\\  
 & MPIW &175.68   &175.31 &336.83  &237.84 &334.61 &183.82\\ 
 \hline
\multirow{2}{*}{Pickups} 
 & EPIW & 656.63  & 610.24  & 1748.87 & 827.60 &847.00  & 625.57\\  
 & MPIW & 2140.94 & 1989.67 & 5702.16 &2698.37 &2761.65 & 2039.67\\ \hline
\multirow{2}{*}{PM2.5} 
 & EPIW &90.20     &82.97   &125.07  &71.89   &114.40 &87.82\\  
 & MPIW &294.09    &270.51  &407.87  &234.41  &373.01 &286.35\\ 
 \hline
\multirow{2}{*}{Air-quality}
 & EPIW &108.50  &105.59  &143.43 &105.60  &145.98 &\textbf{104.79}\\  
 & MPIW &353.77  &344.27  &467.66 &347.77  &475.95 &\textbf{341.68}\\ 
 \hline
\end{tabular}
    \caption{The calibration sharpness of uncertainty evaluation for each method on different datasets. Our method produces relatively sharp prediction intervals, note that the smaller width of the prediction interval is not better without the guarantee of smaller calibration error. We prefer the prediction interval as tight as possible while accurately covering the ground truth.}
    \label{tab:my_label_acc_t1}
\end{table}
\begin{table}[ht]
    \centering
    \begin{tabular}{|l|l|l|l|l|l|l|l|l|}
\hline
Dataset & Metric & MCD & HNN  & Deep-ens & MC NLL & ELL &DGP & proposed\\ \hline
\multirow{4}{*}{Metro-traffic}
 & RMSE  &523.6  &556.3    &508.9  &631.6   &613.5 &646.4 &545.5 \\ 
 & $R^2$ &0.930   &0.921   &0.934  &0.899   &0.904 &0.894 &0.925 \\ 
 & SMAPE &15.76   &15.68   &14.90  &21.21   &18.41 &20.82 &17.47 \\  
 & RSE   &0.275   &0.293   &0.266  &0.332   &0.322 &0.344 &0.279\\ \hline
\multirow{4}{*}{Bike-sharing}
 & RMSE  &38.86  &40.71   &37.60  &89.57  & 52.50 &55.39 &37.93\\ 
 & $R^2$ &0.912  &0.904   &0.918  &0.536  & 0.841 &0.823 &0.917\\ 
 & SMAPE &36.54  &31.98   &27.25  &63.96  & 35.59 &45.94 &29.38\\  
 & RSE   &0.318  &0.339   &0.302  &0.968  & 0.459 &0.481 &0.310 \\ 
 \hline
\multirow{4}{*}{Pickups} 
 & RMSE  & 350.3  & 359.8  & 336.4  & 526.8  & \textbf{325.9} &440.3  & 346.9\\
 & $R^2$ & 0.967  & 0.965  & 0.969  & 0.925  & \textbf{0.971} &0.944  & 0.967\\
 & SMAPE & 7.990  & 7.572  & 7.006  & 11.825 & \textbf{6.824} &12.55  &7.686\\
 & RSE   & 0.189  & 0.194  & 0.181  & 0.295  & \textbf{0.176} &0.249  &0.185\\ \hline
\multirow{4}{*}{PM2.5} 
 & RMSE   &70.95  &58.81    &60.24    &66.77  &61.09  &61.44 &\textbf{57.43}\\ 
 & $R^2$ &0.154  &0.264    &0.389    &0.250  &0.372  &0.365 &0.298\\ 
 & SMAPE &52.91  &52.91    &49.66    &56.87  &50.675 &51.55 &53.24\\  
 & RSE   &1.111  &1.083    &1.290    &1.930  &1.254  &1.469 &\textbf{1.080}\\ 
 \hline
\multirow{4}{*}{Air-quality}
 & RMSE  &81.16   &79.60     &80.03  &87.12   &90.01   &86.05 &80.69\\ 
 & $R^2$ &0.829   &0.836     &0.834  &0.804   &0.790   &0.808 &0.832\\ 
 & SMAPE &26.97   &24.13     &24.60  &28.03   &27.37   &30.97 &24.88\\  
 & RSE   &0.451   &0.451     &0.454  &0.511   &0.535   &0.483 &0.456\\ 
 \hline
\end{tabular}
    \caption{The prediction precision of each method on different datasets. We report the RMSE, $R^2$, SMAPE and RSE for each of the cases, each row has the results of a specific method in a particular metric. Our proposed method achieves competitive results in prediction precision, almost outperforming the results of HNN in all metrics.}
    \label{tab:my_label_rse}
\end{table}

Figure \ref{fig:cali_t_ce_appe} and \ref{fig:cali_reg_appe} shows the proportion that PIs covering ground truths at different confidence levels. The result of our proposed model is most close to the diagonal line, which indicates the best uncertainty calibration among all methods.
Figure~\ref{fig:cali_t_pi_appe} shows the predictions and corresponding 95\% prediction intervals. The intervals are visually sharp and accurately cover the ground truths.
\begin{figure}[htbp]
\centering
\subfigure[Dataset: Metro Traffic.]{
\begin{minipage}[t]{0.5\linewidth}
\centering
\includegraphics[width=1\linewidth]{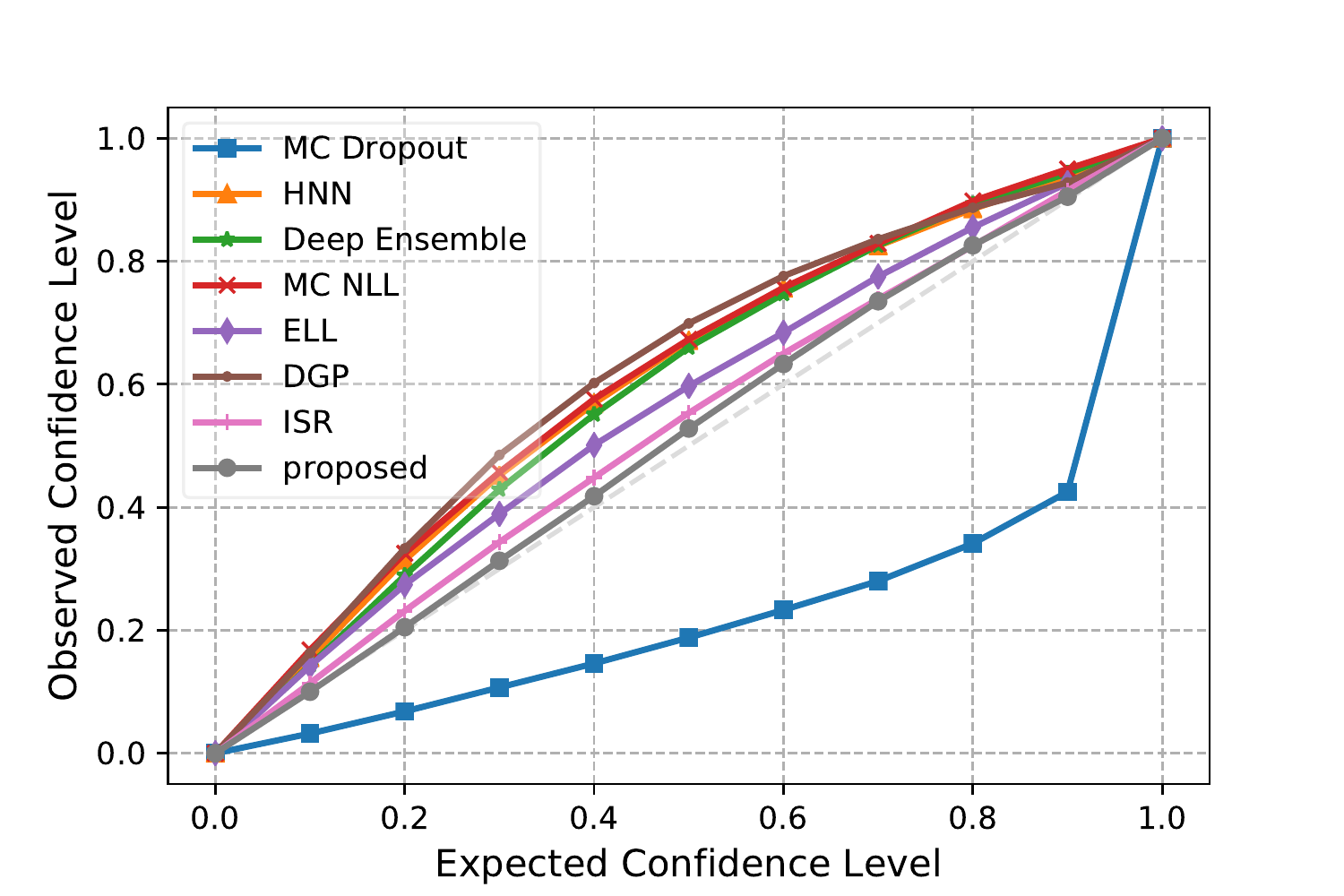}
\end{minipage}
}%
\subfigure[Dataset: PM2.5.]{
\begin{minipage}[t]{0.5\linewidth}
\centering
\includegraphics[width=1\linewidth]{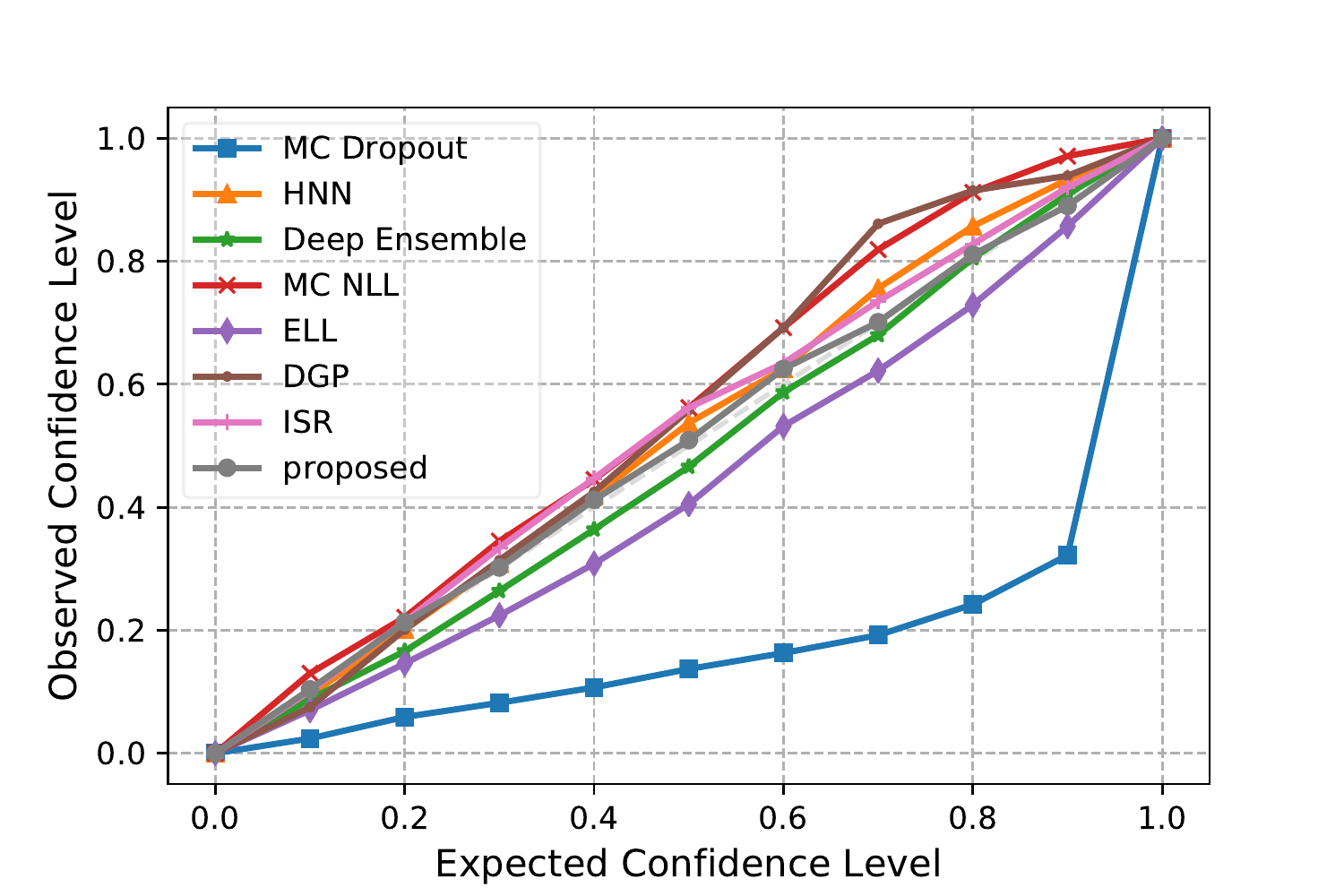}
\end{minipage}
}%
\centering
\caption{Evaluating visually the quality of uncertainty by reliability diagrams. For each dataset, we plot the expected confidence vs observed confidence (empirical coverage probability) on the test data for compared baselines and proposed methods. It is obvious from the figure that observed confidence by our method is almost equal to the expected confidence.}
\label{fig:cali_t_ce_appe}
\end{figure}
\begin{figure}[htbp]
\centering
\subfigure[Dataset: Metro Traffic.]{
\begin{minipage}[t]{0.49\linewidth}
\centering
\includegraphics[width=1\linewidth]{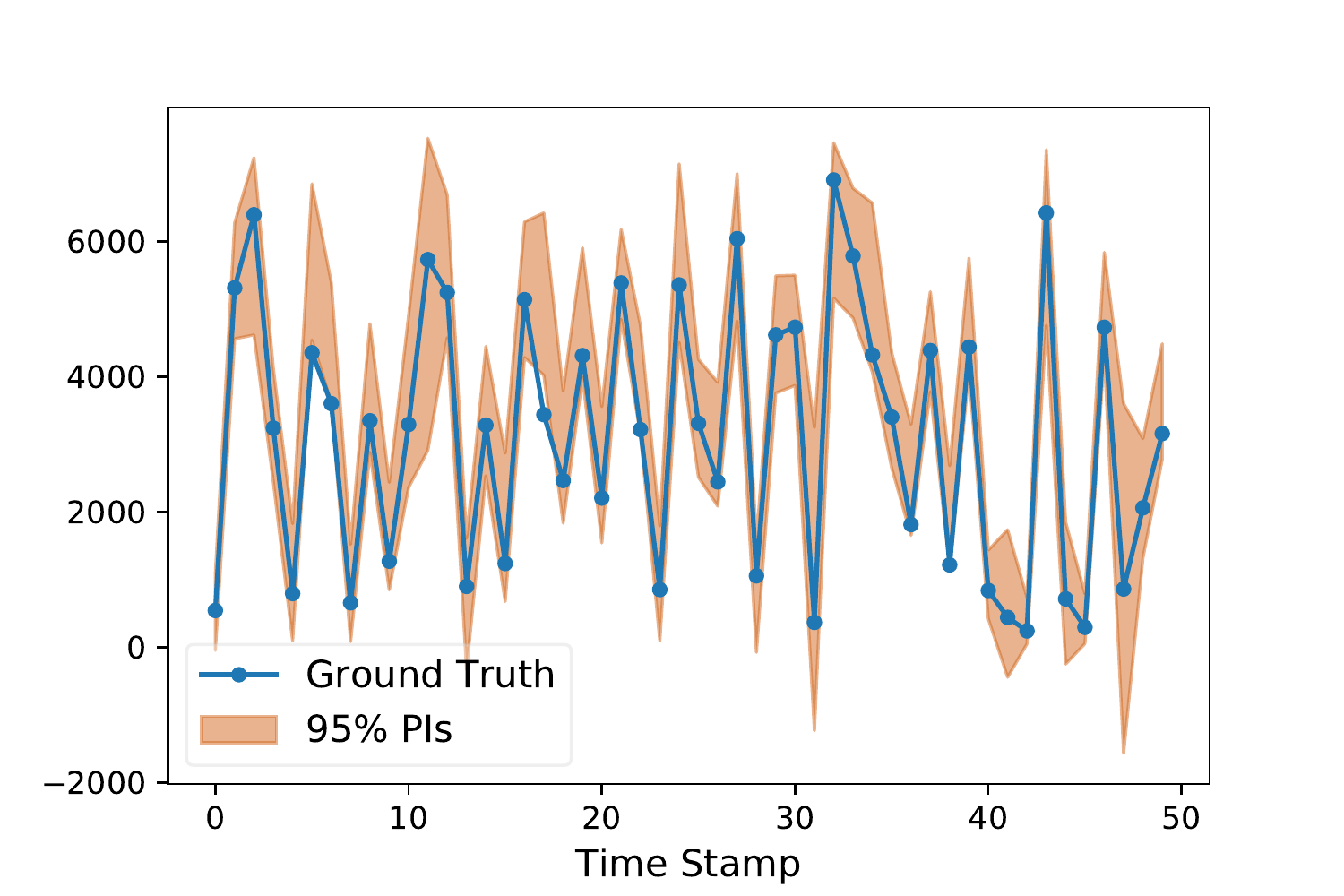}
\end{minipage}
}%
\subfigure[Dataset: PM2.5.]{
\begin{minipage}[t]{0.49\linewidth}
\centering
\includegraphics[width=1\linewidth]{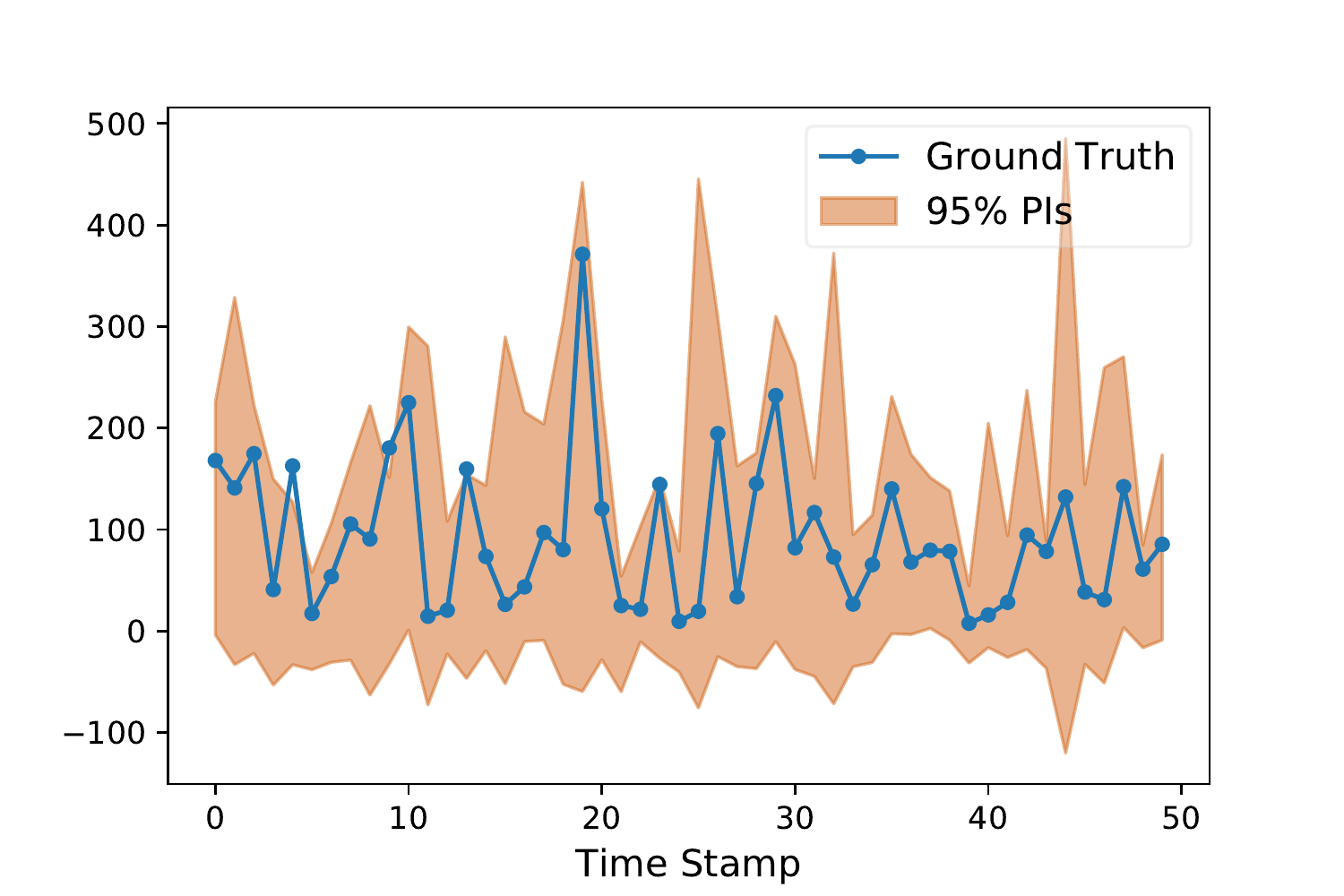}
\end{minipage}
}%
\centering
\caption{Calibrated forecasting: Displayed prediction intervals (PIs) obtained at 95\% confidence level by our proposed method in a time-series. As shown in the figure, the prediction intervals are also sharp while accurately covering the ground truth.}
\label{fig:cali_t_pi_appe}
\end{figure}
\begin{figure}[htbp]
\centering
\subfigure[Dataset: Wine.]{
\begin{minipage}[t]{0.5\linewidth}
\centering
\includegraphics[width=1\linewidth]{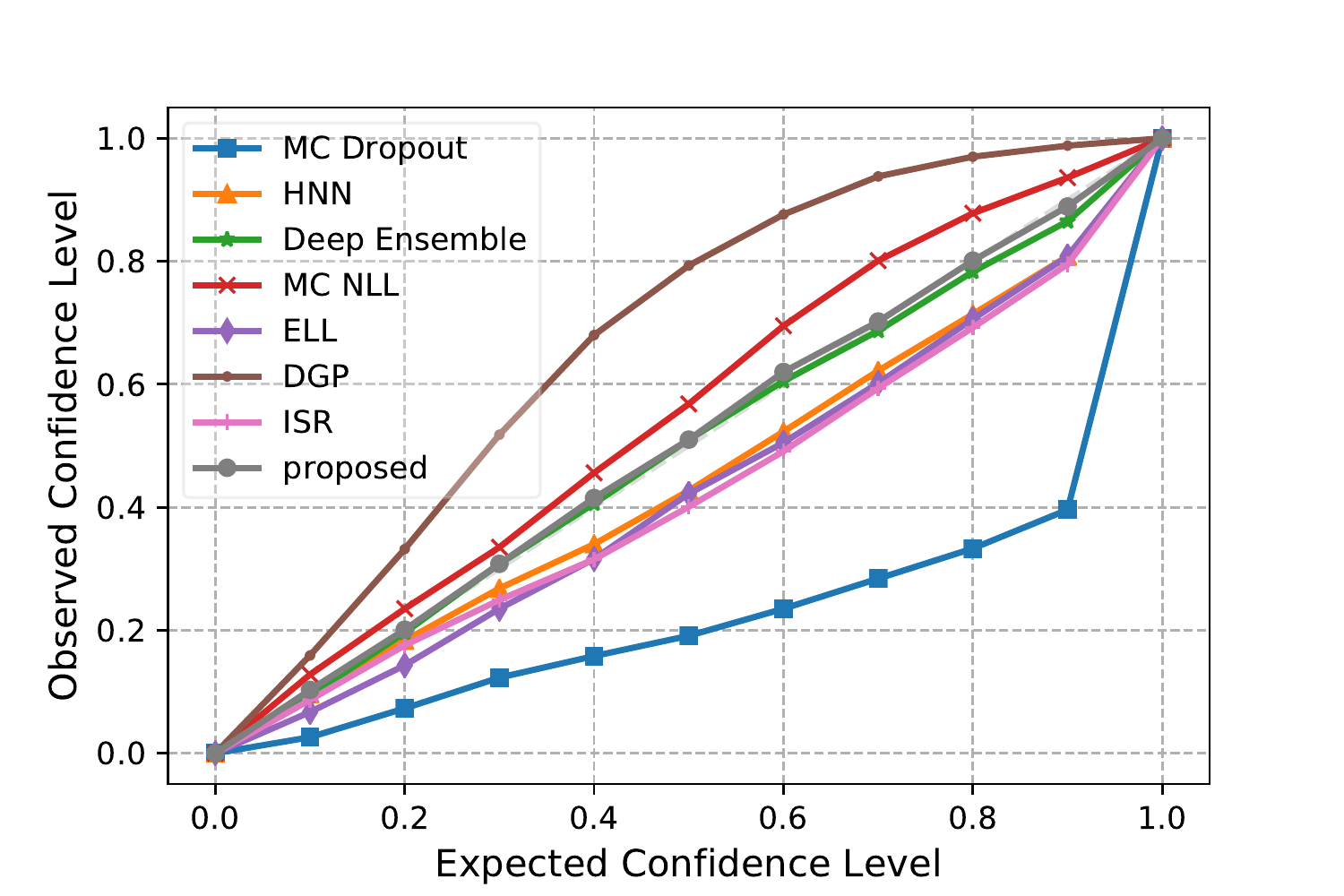}
\end{minipage}%
}%
\subfigure[Dataset: Power Plant.]{
\begin{minipage}[t]{0.5\linewidth}
\centering
\includegraphics[width=1\linewidth]{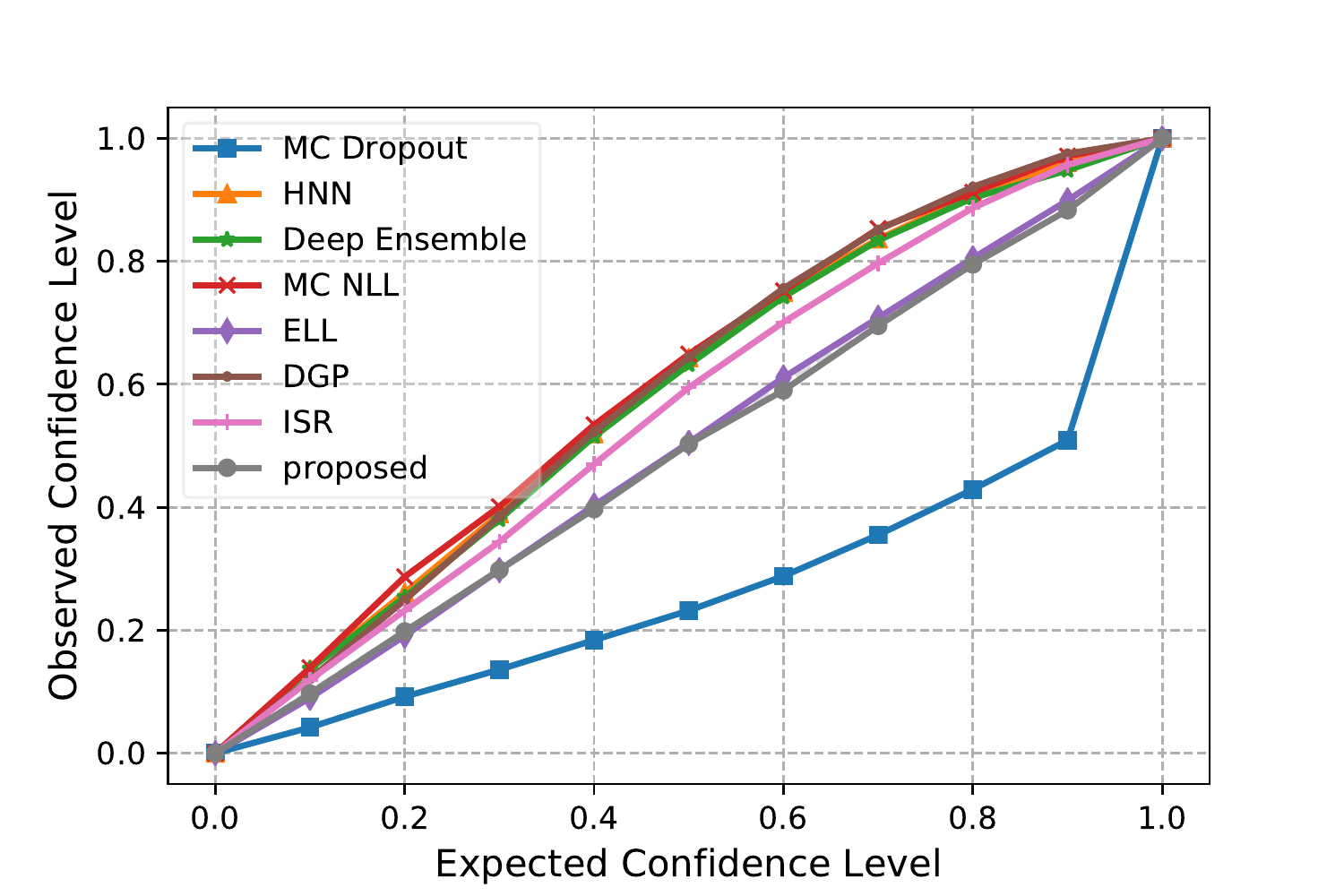}
\end{minipage}%
}
\centering
\caption{Evaluating visually the quality of uncertainty by reliability diagrams. For each dataset, we plot the expected confidence vs observed confidence (empirical coverage probability) on the test data for compared baselines and proposed methods. It is obvious from the figure that observed confidence by our method is almost equal to the expected confidence.}
\label{fig:cali_reg_appe}
\end{figure}

\end{document}